\documentclass{fairmeta}
\usepackage{amsmath,amssymb,amsthm}
\usepackage{booktabs}
\usepackage{array}
\usepackage{graphicx}
\usepackage{hyperref}
\usepackage{url}

\newcommand{\E}{\mathbb E}
\newcommand{\KL}{D_{\mathrm{KL}}}
\newcommand{\dd}{\mathrm d}
\newcommand{\Var}{\operatorname{Var}}
\newcommand{\Cov}{\operatorname{Cov}}
\newcommand{\R}{\mathbb R}
\newtheorem{proposition}{Proposition}
\newtheorem{theorem}{Theorem}

\title{Equal Path Cost, Unequal Output Effects: Understanding Perturbation Propagation in Diffusion Models}
\author[1,*]{Wei Guo}
\author[2,*]{Yaowen Zhang}
\author[1]{Xingtong Ge}
\author[1,\dagger]{Jun Zhang}
\affiliation[1]{iComAI Lab, The Hong Kong University of Science and Technology}
\affiliation[2]{Department of Physics, The Hong Kong University of Science and Technology}
\contribution{\textsuperscript{*}Equal contribution. \quad \textsuperscript{\textdagger}Corresponding author.}
\hypersetup{pdfauthor={Wei Guo, Yaowen Zhang, Xingtong Ge, and Jun Zhang},
  pdftitle={Equal Path Cost, Unequal Output Effects: Understanding Perturbation Propagation in Diffusion Models}}

\begin{document}
\abstract{
Diffusion models have achieved remarkable success in generative modeling, with their sampling procedures routinely modified to control generation and improve efficiency. These modifications introduce perturbations along the sampling trajectory, raising a central question: how do such perturbations affect generated output? To address this question, we develop a theoretical framework to investigate perturbation propagation, combining dynamical analysis of the sampling process with an information-theoretic characterization of output responses. Within this framework, we quantify perturbation strength using the Kullback--Leibler (KL) divergence between perturbed and reference trajectory distributions, termed as \emph{path cost}, which is shown to bound, but do not determine, changes in the output distribution. Building on this analysis, we derive a response identity that connects the propagation and accumulation of local perturbations with the information captured by a selected feature mean, explaining why changes in the output distribution can remain undetected by its first-order response. We test our theoretical analysis through controlled interventions at equal path cost in pretrained diffusion models, revealing distinct patterns of output sensitivity across sampling stages and spatial frequencies. To assess whether our framework can diagnose perturbations arising from practical approximations, we apply it to cache-based acceleration and show that our propagation analysis reliably identifies sampling intervals where caching causes larger image errors.
}

\maketitle
\section{Introduction}
\label{sec:introduction}
Diffusion models generate samples by simulating learned reverse-time dynamics from a noise distribution to the data distribution \citep{ho2020denoising,song2021score}. During inference, these dynamics are perturbed by errors in learned score estimates or denoising predictions \citep{song2021maximum,li2024error}, by deliberate guidance updates \citep{dhariwal2021diffusion}, and by approximations introduced by fast solvers, quantization, and feature caching \citep{lu2022dpmsolver,li2023qdiffusion,ma2024deepcache}. Because a perturbation changes the input to subsequent denoising steps, its effect on the generated sample depends on the downstream sampling dynamics, not only on its magnitude at the timestep where it occurs \citep{li2024error,liu2026epacache}. This creates a mismatch between the trajectory-level quantities used to characterize a perturbation and the output-level quantities used to evaluate its consequences: the former describe the change where it enters the sampling process, whereas the latter concern the endpoint distribution, the response of a selected feature, or the displacement of a generated sample. This mismatch raises a central question: \emph{How do perturbations along the sampling trajectory translate into changes in generated outputs, and to what extent can trajectory-level measures characterize those changes?}

Existing work has addressed different aspects of this question. Analyses of score estimation and discretization use aggregated errors along the sampling trajectory to bound deviations in the generated distribution \citep{song2021maximum,chen2023sampling,benton2024nearly}. Studies of exposure bias and error propagation instead examine how prediction errors accumulate across denoising steps \citep{ning2023input,ning2024exposure,li2024error}. More recently, in the setting of recursively trained diffusion models, \citet{khelifa2026quantifying} derive terminal-divergence lower bounds parameterized by an endpoint-observability coefficient, which measures how much pathwise score-error energy remains detectable in the terminal distribution. These results show that inaccuracies along the sampling trajectory can accumulate into the global generation error and the pathwise error may be only partially observable in the terminal distribution. However, they do not fully characterize how the downstream sampling dynamics mediate the relationship between the magnitude of a perturbation along the trajectory and its effect on the final generated output.

To address this gap, we develop a theoretic framework combining information-theoretic analysis with a dynamical characterization of output response, which explains why perturbations of equal strength can produce unequal output effects and derives how local perturbations propagate and accumulate to determine output responses. We measure perturbation strength by the KL divergence
between perturbed and reference trajectory distributions, which we call \emph{path cost}.
The analysis shows that a perturbation’s effects on generated outputs depend jointly on its path cost, how it propagates and interacts with perturbations at other steps, and how the resulting changes are measured. Our contributions are:
\begin{itemize}
    \item {\textbf{A theoretical framework explaining how perturbations affect generated outputs.}} We derive an exact decomposition of path cost that distinguishes changes in the output distribution from changes in trajectory distributions conditional on the output. We derive a first-order propagation law that quantifies how subsequent sampling dynamics transform local perturbations and how their contributions accumulate to determine the final output response. We further establish bounds relating feature mean responses to local output-distribution changes, explaining why a selected feature mean can fail to detect such changes. These results explain why equal path costs can produce unequal output effects.
    \item {\textbf{Experimental support for the theoretical analysis.}} We test the framework through controlled perturbations at equal path cost, varying the sampling stages at which they are applied and their spatial directions. The resulting output responses differ across sampling stages and directions. The experiments show that first-order propagation can capture the temporal patterns of responses to finite perturbations, although predicted and observed magnitudes can differ substantially. They also show that smaller absolute feature mean responses can correspond to larger changes relative to baseline fluctuations, consistent with our theoretical analysis.
    \item {\textbf{Applying the framework to diagnose output errors in cache-based acceleration.}} We apply the framework to cache-based acceleration for diffusion generation, where earlier model predictions are reused to avoid fresh evaluations. By tracing how the resulting approximation errors propagate and accumulate, we assess their impact on the final image and identify sampling intervals where caching causes larger output deviations. This propagation-based diagnostic identifies such intervals more reliably than local error magnitude and path-cost measures, which remains informative even when predicted and observed error magnitudes differ substantially.
\end{itemize}

% Specifically, we use path KL divergence and its local Fisher limit to place perturbations at different noise levels on a common, noise-aware scale \citep{berner2024control,dupuis2016path}. This scale quantifies how strongly the full sampling trajectory is changed, but does not specify how much of that change remains visible at the endpoint. We therefore condition the path-space information on the endpoint, separating changes in the output distribution from changes confined to conditional trajectory histories \citep{leonard2014survey,pollard2011fisher}. Because endpoint-visible information does not by itself determine the response of a particular measurement, we further relate it to the mean response of a chosen observable on its baseline fluctuation scale \citep{dupuis2016path,dechant2020fluctuation}. Finally, differentiating through the sampler expresses this response as a sum of transported local contributions, making explicit how downstream dynamics and cross-time alignment shape the effect observed at the output \citep{mohamed2020monte}. 
\section{Related work}
\label{sec:related_work}

\paragraph{Error propagation in diffusion sampling.}
Path-space control relates diffusion modeling to KL minimization \citep{berner2024control}, while score-error analyses provide likelihood and sampling-error bounds \citep{song2021maximum,chen2023sampling,benton2024nearly}. \citet{li2024error} relate local and cumulative errors through a propagation equation. \citet{khelifa2026quantifying} quantify the endpoint visibility of pathwise score errors to derive terminal-divergence lower bounds, with applications to recursive training. We study perturbations of a fixed sampler, connecting output visibility to downstream propagation and the information captured by a selected feature mean.

\paragraph{Conditional information and observable response.}
Conditional-score projections characterize Fisher information retained by an observation \citep{pollard2011fisher}, while fluctuation--response bounds relate mean sensitivity to variance and path information \citep{dupuis2016path,dechant2020fluctuation}. \citet{debortoli2025dimensionfree} bound diffusion sampling errors using smooth test functions. We build on these information and response identities to characterize how much path information reaches the output and how much a feature mean captures, with propagation determining its response.

\paragraph{Spectral generation dynamics.}
\citet{benita2025spectral} derive Gaussian spectral transfer dynamics and use data spectra to design schedules. We compare temporal and directional responses at matched path cost, and distinguish absolute response from response relative to baseline feature fluctuations.

\paragraph{Caching and sensitivity-based acceleration.}
DeepCache reuses high-level features \citep{ma2024deepcache}; SenCache uses denoiser sensitivity to latent and timestep changes \citep{haghighi2026sencache}. ERTACache analyzes and corrects feature-shift and step-amplification errors \citep{peng2026ertacache}, while EpaCache calibrates reuse thresholds from terminal deviations under single-step caching interventions \citep{liu2026epacache}. We use reference-trajectory propagation to diagnose image errors from multi-step output caching and test whether it identifies sampling windows with larger errors more reliably than local-defect and path-cost measures.

\section{A Theoretical Framework for Perturbation Propagation}
\label{sec:path_space}
\label{sec:endpoint}

In this section, we derive a theoretical characterization of how local perturbations propagate through diffusion sampling and affect measured output responses. We first quantify trajectory changes by path cost, separate output changes from conditional trajectory changes, and derive perturbation propagation. Our main theorem expresses the measured response through the propagated local perturbations and identifies the fractions of path information retained in the output distribution and captured by the chosen feature mean.

%Together, these results explain how downstream dynamics, cross-time accumulation, and observation shape perturbation propagation. We give the proofs and regularity conditions in Appendix~\ref{app:path_information}.

\subsection{How path cost quantifies perturbation strength}
\label{sec:path_cost}

Diffusion models learn to reverse a forward process that progressively adds noise to data. Generation starts from noise and follows learned reverse dynamics through discrete sampling updates \citep{ho2020denoising,song2021score}. Common stochastic diffusion samplers update the state using a model-dependent mean and Gaussian noise \citep{nichol2021improved}. These updates can then be expressed in the following form
\begin{equation}
    X_{i+1}
    = \mathcal G_i^0(X_i,\xi_i)
    = \mu_i(X_i)+L_i(X_i)\xi_i,
    \qquad
    \xi_i\overset{\mathrm{i.i.d.}}{\sim}\mathcal N(0,I).
    \label{eq:reference_sampler}
\end{equation}
Here, $X_i$ denotes the sampling state after $i$ generation steps, with $X_0$ drawn from the initial noise distribution and $X_M$ denoting the terminal sampling state. The index $i=0,\ldots,M-1$ follows generation order from noise toward the output. The map $\mathcal G_i^0$ defines one update of the reference sampler with the superscript $0$ indicating the unperturbed process.The variables $\xi_i$ are independent standard Gaussian noise vectors, and $L_i(X_i)\xi_i$ is the stochastic component of the sampling update, with conditional covariance $\Lambda_i(X_i)=L_i(X_i)L_i(X_i)^\top$.

As mentioned above, local perturbations, such as prediction errors and computational approximations, modify the sampling updates in Eq.~\eqref{eq:reference_sampler}. We model the resulting dynamics by a family of perturbed sampling processes parameterized by perturbation strength $\varepsilon$, with $\varepsilon=0$ corresponding to the reference process. To study how local perturbations affect the sampling process, we consider the complete trajectories $\Gamma=(X_0,\ldots,X_M)$ and denote their joint distribution under the reference and perturbed processes as $P$ and $Q_\varepsilon$, respectively, so that $Q_0=P$. Such effects can be measured through the statistical distinguishability of these trajectory distributions using their Kullback--Leibler (KL) divergence \citep{cover2006elements}, which we define as the \emph{path cost}, $\mathcal C(\varepsilon)=\KL(Q_\varepsilon\|P)$.

We consider perturbations that preserve the initial distribution and shift each Gaussian transition mean by $\varepsilon u_i$, while retaining its positive definite covariance $\Lambda_i$. The resulting transition densities are strictly positive, ensuring that $Q_\varepsilon$ is absolutely continuous with respect to $P$, i.e., $Q_\varepsilon\ll P$ \citep{kallenberg2021foundations}. Then, by using the relative-entropy chain rule \citep{cover2006elements}, we can obtain the expression of the path cost as a sum of conditional transition KLs as
\begin{equation}
 \mathcal C(\varepsilon):=\KL(Q_\varepsilon\|P)
 =\frac{\varepsilon^2}{2}\E_{Q_\varepsilon}
 \sum_i u_i^\top\Lambda_i^{-1}u_i.
 \label{eq:path_cost}
\end{equation}
This expression shows how local mean perturbations contribute to the change in the trajectory distribution. Each contribution is weighted by the inverse transition covariance: for $\Lambda_i=\nu_i I$, the conditional KL at step $i$ is $\varepsilon^2\|u_i\|^2/(2\nu_i)$. Thus, the same mean shift produces a larger change in the transition distribution, as measured by KL divergence, when the sampling noise is lower. The derivation details are provided in Appendix~\ref{app:discrete_cost}. 

\subsection{Why path cost does not determine output changes}
\label{sec:conditional_split}
\begin{figure}[t]
 \centering
 \includegraphics[width=\linewidth]{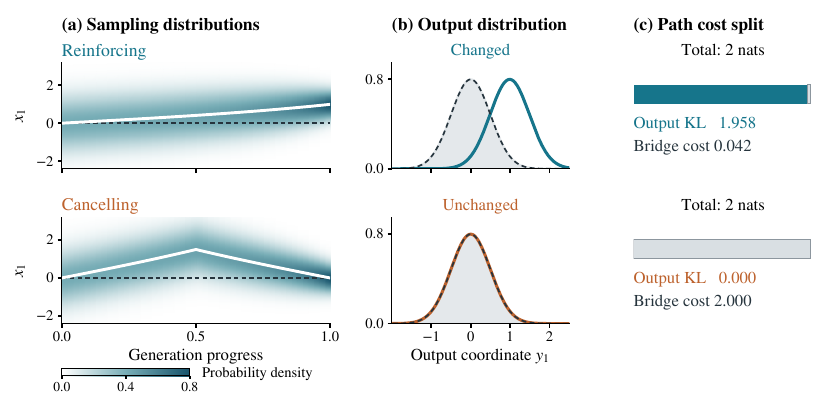}
 \caption{\textbf{Equal path costs, different output-distribution changes.} Two temporal perturbation sequences act only on the first coordinate of a two-dimensional Gaussian reverse diffusion, each with path cost $\mathcal C=2$ nats. Their contributions reinforce (top) or cancel (bottom) at the output. (a) Coordinate density evolution on a shared color scale, with perturbed (white) and reference (dashed) means. (b) Perturbed (solid) and reference (dashed) output densities. (c) Output KL (colored) and bridge cost (gray). Construction details are in Appendix~\ref{app:toy_extensions}.}
 \label{fig:toy_visibility}
\end{figure}

Having quantified how perturbations change the sampling trajectory distribution, we now examine their effect on the distribution of generated outputs. The terminal sampling state $X_M$ may require a final deterministic transformation, such as decoding a latent representation, to produce the generated output. We therefore write $Y=\phi(X_M)$, where $\phi$ denotes this transformation and is the identity when $X_M$ is already the output. We use the same map for both the reference and perturbed processes. 

Then, we derive an exact decomposition of path cost into the change in the output distribution and the change in trajectory distributions conditional on the output in the following Proposition \ref{prop:kl_split}. 
\begin{proposition}[Decomposition of path and output changes]
\label{prop:kl_split}
For $Q_\varepsilon\ll P$ with finite path KL and regular conditional path distributions, let $p_Y$ and $q_{\varepsilon,Y}$ be the output distributions under $P$ and $Q_\varepsilon$. Then, the path cost can be decomposed as 
\begin{equation}
 \mathcal C(\varepsilon)
 =\underbrace{\KL(q_{\varepsilon,Y}\|p_Y)}_{\mathcal C_Y(\varepsilon)}
 +\underbrace{\E_{q_{\varepsilon,Y}}\KL\!\left(Q_\varepsilon(\cdot\mid Y)\|P(\cdot\mid Y)\right)}_{\mathcal C_{\mathrm{bridge}\mid Y}(\varepsilon)}.
 \label{eq:kl_split}
\end{equation}
\end{proposition}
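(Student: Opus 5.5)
The plan is to prove Proposition~\ref{prop:kl_split} as an instance of the chain rule for relative entropy, applied to the pair $(Y,\Gamma)$ rather than to the time-ordered transitions used for Eq.~\eqref{eq:path_cost}. Since $Y=\phi(X_M)$ is a measurable function of $\Gamma$, the law of $\Gamma$ under either measure is the same as the joint law of $(Y,\Gamma)$. Disintegrating each measure along $Y$ then splits the path KL into a marginal term for $Y$ and an averaged conditional term. Because $\phi$ is the same map under both processes, $q_{\varepsilon,Y}$ and $p_Y$ are simply the pushforwards $Q_\varepsilon\circ Y^{-1}$ and $P\circ Y^{-1}$.

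The first step is to set up the densities. Since $Q_\varepsilon\ll P$, the Radon--Nikodym derivative $f=\dd Q_\varepsilon/\dd P$ exists. For any set $A$ in the output space, $P(Y\in A)=0$ forces $Q_\varepsilon(Y\in A)=0$, so $q_{\varepsilon,Y}\ll p_Y$. Let $g=\dd q_{\varepsilon,Y}/\dd p_Y$. One checks that $g(y)=\E_P[f\mid Y=y]$ for $p_Y$-a.e.\ $y$ by testing both sides against indicators of $\{Y\in A\}$.

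Next, using the assumed regular conditional distributions $P(\cdot\mid Y=y)$ and $Q_\varepsilon(\cdot\mid Y=y)$, I would verify that for $q_{\varepsilon,Y}$-a.e.\ $y$ we have $Q_\varepsilon(\cdot\mid Y=y)\ll P(\cdot\mid Y=y)$, with density $f(\gamma)/g(y)$. This holds on $\{g>0\}$, which carries full $q_{\varepsilon,Y}$-mass. The check is that for measurable $B$ and $A$,
\[
\int_A\int_B \frac{f}{g(y)}\,P(\dd\gamma\mid y)\,q_{\varepsilon,Y}(\dd y)=\int_A\int_B f\,P(\dd\gamma\mid y)\,p_Y(\dd y)=Q_\varepsilon(\Gamma\in B,\,Y\in A),
\]
combined with uniqueness of disintegrations.

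Then I would take logarithms and integrate under $Q_\varepsilon$. This gives the pointwise identity $\log f(\Gamma)=\log g(Y)+\log\bigl(f(\Gamma)/g(Y)\bigr)$, which holds $Q_\varepsilon$-a.s. Taking $\E_{Q_\varepsilon}$ and disintegrating the second term over $Y\sim q_{\varepsilon,Y}$ yields $\mathcal C_Y(\varepsilon)+\mathcal C_{\mathrm{bridge}\mid Y}(\varepsilon)$.

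The main obstacle is justifying that this split of the expectation is legitimate, i.e.\ avoiding an $\infty-\infty$ situation. I would first note that both terms are nonnegative. $\mathcal C_Y\ge 0$ is a KL divergence, and each inner conditional KL is nonnegative by Gibbs' inequality, so the outer expectation in the bridge term is well defined in $[0,\infty]$. I would then use the standard device that, for densities $h$ with $\int h\,\dd\mu=1$, the negative part of $h\log h$ is bounded by $e^{-1}$. This makes $\E_{Q_\varepsilon}[\log g(Y)]$ and the conditional integrals well defined in $(-\infty,\infty]$, so additivity applies via Tonelli on positive parts. Finiteness of $\mathcal C(\varepsilon)$ then forces both summands to be finite. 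If a cleaner route is preferred, an alternative is to cite the general chain rule for KL under regular disintegrations, or to approximate $Y$ by finite partitions and pass to the limit using the data-processing inequality and monotone convergence. In either case, the nontrivial content is only the measurability and integrability bookkeeping; the algebra is a single line.
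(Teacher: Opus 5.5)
Your proposal is correct and follows the paper's proof essentially step for step. You identify the output likelihood ratio as $\E_P[L_\varepsilon\mid Y]$, obtain the conditional density $L_\varepsilon/\ell_\varepsilon(Y)$ by disintegration, and split $\log L_\varepsilon$ under $Q_\varepsilon$. Your explicit control of the negative parts via $h\log h\ge -e^{-1}$, which rules out an $\infty-\infty$ split, is bookkeeping the paper leaves implicit. Note that the almost-everywhere uniqueness of disintegrations you invoke relies on a countably generated (e.g.\ standard Borel) trajectory space, which the paper assumes explicitly.
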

\begin{proof}
    Please see Appendix~\ref{app:path_decomposition}.
\end{proof}
The above decomposition separates two consequences of a perturbation. The first term, $\mathcal C_Y(\varepsilon)$, measures the change distinguishable from generated outputs. The second measures how the trajectory distributions differ when conditioned on the same output, averaged over the perturbed output distribution, which we call as the \emph{bridge cost}. Since bridge cost is non-negative, we have $0\le\mathcal C_Y(\varepsilon)\le\mathcal C(\varepsilon)$. Therefore, the path cost bounds the changes in output distribution, but equal path costs can yield different output changes depending on their bridge components. 

To illustrate this decomposition, Figure~\ref{fig:toy_visibility} presents a Gaussian toy
example that we construct to show how perturbations with equal path costs can produce different output effects. In this example, scaling the cancelling perturbation sequence yields any positive path cost while output KL remains zero. Thus, a positive path cost alone does not imply a positive output-distribution change. To understand how perturbations from different steps reinforce or cancel at the output, we next examine their propagation through the sampling dynamics.

\subsection{How local perturbations propagate and accumulate}
\label{sec:response_propagation}

To trace how local perturbations affect generated outputs, we analyze the first-order response with respect to perturbation strength at $\varepsilon=0$. This gives a linear description of propagation along the reference trajectories.

Let $X_{i+1}^\varepsilon=\mathcal G_i^\varepsilon(X_i^\varepsilon,\xi_i)$ denote the perturbed update, with $\mathcal G_i^\varepsilon|_{\varepsilon=0}=\mathcal G_i^0$ as in Eq.~\eqref{eq:reference_sampler}. To compare corresponding trajectories, we hold the initial state and noise variables fixed. Along each reference trajectory, we define $J_i=D_x\mathcal G_i^0$ as the Jacobian matrix of the reference update with respect to its input state, which describes how an input-state change propagates to the next step. We also define $e_i=\left.\partial_\varepsilon\mathcal G_i^\varepsilon\right|_0$, as the derivative of the sampling update with respect to perturbation strength with its input state and noise remained fixed. This term measures the local perturbation introduced by the update itself. For the mean perturbations in Section~\ref{sec:path_cost}, $e_i=u_i$.

To examine how local perturbations affect a particular property of the generated output, we represent that property by a scalar feature $f(Y)$, such as a spatial-frequency component of an image\citep{ahmed1974discrete}. Its expectation summarizes this property across generated samples. Therefore, we study how the expected feature value changes under perturbation and define the feature mean response \citep{kubo2012statistical} as $\chi_f=\left.\partial_\varepsilon\E_{Q_\varepsilon}f(Y)\right|_0$. Using pathwise differentiation \citep{mohamed2020monte}, we express both the output response and the feature mean response in the following proposition.

\begin{proposition}[Perturbation propagation dynamics]
\label{prop:transport_response}
Assume that $\mathcal G_i^\varepsilon$ is differentiable in the state and perturbation strength, and that $\phi$ is differentiable at the terminal state, almost surely along reference trajectories. The output response, defined as the derivative of the generated output with respect to perturbation strength at $\varepsilon=0$, satisfies
\begin{equation}
 V:=\left.\partial_\varepsilon Y^\varepsilon\right|_0
 =\sum_{i=0}^{M-1}T_i e_i,
 \qquad T_i=D\phi(X_M)J_{M-1}\cdots J_{i+1},
 \label{eq:full_response_transport}
\end{equation}
where $T_i$ propagates the local perturbation at step $i$ through the remaining sampling updates and the output map $\phi$. Furthermore, if $f$ is continuously differentiable and $[f(Y^\varepsilon)-f(Y)]/\varepsilon$ is dominated by an integrable random variable near $\varepsilon=0$, then the feature mean response is given as
\begin{equation}
 \chi_f=\E_P[\nabla f(Y)^\top V]
       =\sum_{i=0}^{M-1}\E_P[\nabla f(Y)^\top T_i e_i].
 \label{eq:tangent_response}
\end{equation}
\end{proposition}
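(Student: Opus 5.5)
The proof couples the perturbed and reference trajectories through the same initial state $X_0$ and the same noise variables $\xi_0,\ldots,\xi_{M-1}$. Along a fixed reference trajectory it then differentiates the recursion $X_{i+1}^\varepsilon=\mathcal G_i^\varepsilon(X_i^\varepsilon,\xi_i)$ with respect to $\varepsilon$ at $\varepsilon=0$. Define the tangent states $\delta_i:=\partial_\varepsilon X_i^\varepsilon|_0$. Because the initial distribution is preserved and $X_0$ is held fixed, $\delta_0=0$. The chain rule, valid by the assumed differentiability of $\mathcal G_i^\varepsilon$ in both the state and $\varepsilon$, gives the linear variational recursion
\begin{equation*}
 \delta_{i+1}=D_x\mathcal G_i^0(X_i,\xi_i)\,\delta_i+\partial_\varepsilon\mathcal G_i^\varepsilon(X_i,\xi_i)\big|_0=J_i\delta_i+e_i .
\end{equation*}
A short induction on $i$ then establishes $\delta_k=\sum_{i=0}^{k-1}J_{k-1}\cdots J_{i+1}e_i$, where the empty product for $i=k-1$ is the identity.

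Setting $k=M$ and applying the chain rule once more to $Y^\varepsilon=\phi(X_M^\varepsilon)$, with $\phi$ differentiable at $X_M$, yields $V=D\phi(X_M)\delta_M=\sum_i T_ie_i$. This is Eq.~\eqref{eq:full_response_transport}. All of these statements hold almost surely along reference trajectories, since they are deterministic identities evaluated pathwise at each fixed $(X_0,\xi)$.

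For the feature mean response, the coupling puts the perturbed outputs on the same probability space as the reference outputs, driven by the law of $(X_0,\xi)$. Hence $\E_{Q_\varepsilon}f(Y)=\E[f(Y^\varepsilon)]$, and
\begin{equation*}
 \chi_f=\lim_{\varepsilon\to0}\E\!\left[\frac{f(Y^\varepsilon)-f(Y)}{\varepsilon}\right].
\end{equation*}
Since $f$ is $C^1$ and $Y^\varepsilon$ is differentiable in $\varepsilon$ at $0$, the difference quotient converges pointwise to $\nabla f(Y)^\top V$. The assumed integrable domination lets us apply dominated convergence to exchange the limit and the expectation, which gives $\chi_f=\E_P[\nabla f(Y)^\top V]$. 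Linearity of expectation then splits this into the per-step sum in Eq.~\eqref{eq:tangent_response}. Each summand is integrable because the total is, but we can only assert this splitting term by term when each $\E_P[\nabla f(Y)^\top T_ie_i]$ is finite. I would state this explicitly, or read the sum as the expectation of the sum.

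The main subtle step is the identification $\E_{Q_\varepsilon}f(Y)=\E[f(Y^\varepsilon)]$: the perturbed process defined by the pathwise map must have the trajectory law $Q_\varepsilon$. For mean-shift perturbations, $\mathcal G_i^\varepsilon(x,\xi)=\mu_i(x)+\varepsilon u_i(x)+L_i(x)\xi$. Conditional on $X_i^\varepsilon$, the noise $\xi_i$ is independent $\mathcal N(0,I)$, so the transition has exactly the Gaussian kernel of $Q_\varepsilon$. In general I would take this representation as the definition of the perturbed process, so that $Q_\varepsilon$ is its trajectory law. Beyond this point, the only analytic input is the dominated-convergence exchange, which the hypotheses supply directly.
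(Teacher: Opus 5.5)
Your argument matches the paper's proof: fix $X_0$ and the innovations, differentiate to get $\delta X_{i+1}=J_i\delta X_i+e_i$ with $\delta X_0=0$, unroll, apply the chain rule through $\phi$, and use the common-randomness coupling (which the paper also realizes with law $q_{\varepsilon,Y}$) together with the domination hypothesis to exchange derivative and expectation. One slip: integrability of $\nabla f(Y)^\top V$ does not imply integrability of each $\nabla f(Y)^\top T_ie_i$, since summands can cancel, so drop the clause claiming it and keep your hedge, a point the paper skips when it simply substitutes the sum.
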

\begin{proof}
    Please see Appendix~\ref{app:propagation}.
\end{proof}

Proposition~\ref{prop:transport_response} shows how downstream dynamics transform each local perturbation through $T_i$, changing its strength and direction at the output. The propagated perturbations from different steps add to form the total output response $V$. To examine how their magnitudes and relative directions determine the squared magnitude of this response, we write $v_i=T_i e_i$ and expand the norm of $V$ as
\begin{equation}
 \|V\|^2=\sum_i\|v_i\|^2+2\sum_{i<j}\langle v_i,v_j\rangle.
 \label{eq:response_accumulation}
\end{equation}
The first term sums the squared magnitudes of the propagated perturbations considered separately. The cross terms account for their relative directions: positive inner products increase the total squared response, while negative inner products reduce it. Thus, the magnitude of the total response depends on both the sizes of the propagated perturbations and how their directions align at the output.

The above analysis explains how local perturbations propagate and accumulate to determine the output response and the mean response of a selected feature. However, this feature mean describes only one property of the output distribution. Interpreting its response therefore requires understanding how much distributional change it captures and how this depends on the choice of feature.

\subsection{What feature mean responses reveal about output changes}
\label{sec:observable_response}

To identify the distributional changes captured by the feature mean, we derive its response in terms of trajectory likelihoods. Differentiating $\E_{Q_\varepsilon}f(Y)=\E_P[f(Y)\,\dd Q_\varepsilon/\dd P]$ with respect to perturbation strength at $\varepsilon=0$, we obtain
\begin{equation}
 \chi_f=\E_P[f(Y)S],
 \label{eq:path_score_response}
\end{equation}
where $S=\left.\partial_\varepsilon\log(\dd Q_\varepsilon/\dd P)\right|_0$ is the path likelihood score, describing changes in trajectory log likelihood with perturbation strength.

To identify the leading dependence of path cost on perturbation strength, we derive the expansion of Eq.~\eqref{eq:path_cost} near the reference sampler as
\begin{equation}
 \mathcal C(\varepsilon)=\tfrac12\varepsilon^2F_{\mathrm{path}}+o(\varepsilon^2),
 \label{eq:path_fisher}
\end{equation}
where $F_{\mathrm{path}}=\E_P S^2$ is the path Fisher information \citep{amari2016information,dupuis2016path}. Since $f(Y)$ depends only on the output, we can average $S$ over trajectories sharing that output without changing $\chi_f$. This gives the output likelihood score $S_Y=\E_P[S\mid Y]$ \citep{pollard2011fisher}. We define the output Fisher information as $F_Y=\E_P S_Y^2$ and derive the second-order expansion of the KL decomposition in Eq.~\eqref{eq:kl_split}, yielding
\begin{equation}
 F_{\mathrm{path}}=F_Y+\underbrace{\E_P\Var_P(S\mid Y)}_{F_{\mathrm{bridge}\mid Y}}.
 \label{eq:fisher_split}
\end{equation}
The first term determines local output-distribution change through $\mathcal C_Y(\varepsilon)=\tfrac12\varepsilon^2F_Y+o(\varepsilon^2)$, while the second is the local bridge component. For $F_{\mathrm{path}}>0$, the ratio $F_Y/F_{\mathrm{path}}$ therefore gives the fraction of path cost visible at the output as $\varepsilon\to0$. The assumptions and derivations for Eqs.~\eqref{eq:path_fisher} and~\eqref{eq:fisher_split} are provided in Appendix~\ref{app:projection}.

To compare the feature mean response with the output and path Fisher information, we define the squared normalized mean response as $B_f=\chi_f^2/\Var_P(f(Y))$. For small $\varepsilon$, the squared mean shift on this scale is $\varepsilon^2B_f+o(\varepsilon^2)$, matching the perturbation order of the local KL expansions. We derive a lower bound on output Fisher information from this normalized response using the fluctuation--response bound \citep{dupuis2016path, dechant2020fluctuation} in the following Proposition \ref{prop:response}.
\begin{proposition}[Feature response and output information]
\label{prop:response}
Assume that the path likelihood ratio is differentiable in mean square at $\varepsilon=0$, meaning $\E_P[(\dd Q_\varepsilon/\dd P-1-\varepsilon S)^2]=o(\varepsilon^2)$, with $\E_P S^2<\infty$. Then any real-valued output feature $f(Y)$ with $0<\Var_P(f(Y))<\infty$ satisfies
\begin{equation}
 \chi_f=\E_P[f(Y)S]=\E_P[f(Y)S_Y],
 \qquad B_f=\frac{\chi_f^2}{\Var_P(f(Y))}\le F_Y\le F_{\mathrm{path}}.
 \label{eq:response_bound}
\end{equation}
\end{proposition}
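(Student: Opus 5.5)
The plan has three steps: derive the response identity $\chi_f=\E_P[f(Y)S]$ from mean-square differentiability, replace $S$ by its output projection $S_Y$, and then apply Cauchy--Schwarz and the law of total variance.

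\textbf{Step 1: response identity.} Write $R_\varepsilon=\dd Q_\varepsilon/\dd P$. Then
\[
\E_{Q_\varepsilon}f(Y)-\E_P f(Y)=\E_P[f(Y)(R_\varepsilon-1)].
\]
Since $\E_P R_\varepsilon=1$, we may subtract $\E_P f(Y)$ inside and work with the centered feature $\bar f=f(Y)-\E_P f(Y)$, which lies in $L^2(P)$ because $\Var_P(f(Y))<\infty$. Then
\[
\frac{1}{\varepsilon}\bigl(\E_{Q_\varepsilon}f-\E_P f\bigr)-\E_P[\bar f S]=\E_P\Bigl[\bar f\,\frac{R_\varepsilon-1-\varepsilon S}{\varepsilon}\Bigr].
\]
By Cauchy--Schwarz this is at most $\|\bar f\|_2\cdot o(\varepsilon)/\varepsilon\to0$, using the assumed mean-square differentiability. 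Hence $\chi_f=\E_P[\bar f S]$.

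\textbf{Step 2: centering $S$ and projecting onto the output.} To remove the centering, I need $\E_P S=0$. Mean-square differentiability implies $L^1$ convergence of $(R_\varepsilon-1)/\varepsilon$ to $S$. Since $\E_P[(R_\varepsilon-1)/\varepsilon]=0$ for every $\varepsilon$, this gives $\E_P S=0$, and therefore $\chi_f=\E_P[f(Y)S]$. Next, $S\in L^2$, so $S_Y=\E_P[S\mid Y]$ exists. By the tower property,
\[
\E_P[\bar f S]=\E_P\bigl[\bar f\,\E_P[S\mid Y]\bigr]=\E_P[\bar f S_Y].
\]
Because $\E_P S_Y=\E_P S=0$, the centering can again be dropped, giving $\chi_f=\E_P[f(Y)S_Y]$.

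\textbf{Step 3: the inequalities.} Cauchy--Schwarz gives
\[
\chi_f^2=(\E_P[\bar f S_Y])^2\le \Var_P(f(Y))\,\E_P S_Y^2=\Var_P(f(Y))\,F_Y.
\]
Dividing by the positive variance yields $B_f\le F_Y$. For $F_Y\le F_{\mathrm{path}}$, apply conditional Jensen, or equivalently the law of total variance:
\[
\E_P S^2=\E_P S_Y^2+\E_P\Var_P(S\mid Y)\ge F_Y,
\]
which is Eq.~\eqref{eq:fisher_split} with a nonnegative bridge term.

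\textbf{Main obstacle.} The only delicate point is justifying the exchange of limit and expectation with a possibly unbounded $f$. That exchange is exactly where Step 1 needs the centered $L^2$ feature together with the $L^2$ differentiability hypothesis, and where Step 2 needs $\E_P S=0$. No pathwise regularity is needed, since everything is done at the level of $L^2(P)$ inner products.
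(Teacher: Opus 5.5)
Your proof is correct and follows essentially the same route as the paper. Both arguments differentiate $\E_P[f(Y)L_\varepsilon]$ using the $L^2$ remainder and Cauchy--Schwarz, replace $S$ by $S_Y$ by conditioning on $Y$ with $\E_P S=0$, apply Cauchy--Schwarz to get $B_f\le F_Y$, and use the orthogonal split $F_{\mathrm{path}}=F_Y+\E_P\Var_P(S\mid Y)$ to get $F_Y\le F_{\mathrm{path}}$; your explicit derivations of $\E_P S=0$ (via $L^1$ convergence) and of the centering step supply details the paper only asserts.
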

\begin{proof}
    Please see Appendix~\ref{app:response}.
\end{proof}
Proposition~\ref{prop:response} identifies two stages at which perturbation information can be lost, from the trajectory to the output distribution and from the output distribution to the selected feature mean. We derive an exact expression for the fraction of path information captured by the chosen feature mean, incorporating both perturbation propagation and the two stages of information loss in the following Theorem \ref{thm:propagation_visibility}.

\begin{theorem}[Perturbation propagation and measured output response]
\label{thm:propagation_visibility}
Under the assumptions of Propositions~\ref{prop:transport_response} and~\ref{prop:response}, let $f$ be continuously differentiable with $0<\Var_P(f(Y))<\infty$. If $F_Y>0$, the fraction of path information captured by the feature mean response satisfies
\begin{equation}
 \frac{B_f}{F_{\mathrm{path}}}
 =\frac{\left(\sum_i\E_P[\nabla f(Y)^\top T_i e_i]\right)^2}{\Var_P(f(Y))\,F_{\mathrm{path}}}
 =\underbrace{\frac{F_Y}{F_{\mathrm{path}}}}_{\text{output visibility}}
 \underbrace{\operatorname{Corr}_P(f(Y),S_Y)^2}_{\text{feature--score alignment}}.
 \label{eq:response_visibility_alignment}
\end{equation}
\end{theorem}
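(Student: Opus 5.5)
The identity follows by combining the two expressions for $\chi_f$ already available and rewriting $B_f$ through the correlation between $f(Y)$ and the output score $S_Y$.

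\textbf{First equality.} By Proposition~\ref{prop:transport_response}, under its differentiability and domination assumptions,
\[
\chi_f=\sum_{i=0}^{M-1}\E_P[\nabla f(Y)^\top T_i e_i].
\]
Substituting this into $B_f=\chi_f^2/\Var_P(f(Y))$ and dividing by $F_{\mathrm{path}}$ gives the first equality in Eq.~\eqref{eq:response_visibility_alignment}. Here $F_{\mathrm{path}}\ge F_Y>0$ by Eq.~\eqref{eq:fisher_split}, so the division is legitimate. Both propositions compute the same derivative $\partial_\varepsilon\E_{Q_\varepsilon}f(Y)|_0$, so the pathwise form of Proposition~\ref{prop:transport_response} and the likelihood-ratio form of Proposition~\ref{prop:response} agree and no reconciliation is needed.

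\textbf{Second equality.} Proposition~\ref{prop:response} gives $\chi_f=\E_P[f(Y)S_Y]$.

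\begin{itemize}
\item \emph{Centering the score.} $S_Y$ has mean zero under $P$. By the tower property $\E_P S_Y=\E_P S$. Mean-square differentiability gives $\E_P S=\lim_{\varepsilon\to 0}\E_P[(\dd Q_\varepsilon/\dd P-1)/\varepsilon]=0$, because $\E_P[\dd Q_\varepsilon/\dd P]=1$ and $L^2$ convergence implies $L^1$ convergence.
\item \emph{Covariance form.} Since $\E_P S_Y=0$, we have $\chi_f=\Cov_P(f(Y),S_Y)$.
\item \emph{Finiteness.} $\Var_P(S_Y)=\E_P S_Y^2=F_Y$, which is finite and positive because $F_Y\le F_{\mathrm{path}}<\infty$ by conditional Jensen.
\end{itemize}

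With $\Var_P(f(Y))>0$ as well, the correlation is well defined and
\[
B_f=\frac{\Cov_P(f(Y),S_Y)^2}{\Var_P(f(Y))}=F_Y\,\operatorname{Corr}_P(f(Y),S_Y)^2 .
\]
Dividing by $F_{\mathrm{path}}$ yields the factorization into output visibility times feature--score alignment.

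\textbf{Main obstacle.} The algebra is routine; the delicate step is justifying $\E_P S=0$ and the tower step $\E_P[f(Y)S]=\E_P[f(Y)S_Y]$ when $f$ is only square-integrable. For the tower step, I would use Cauchy--Schwarz: $f(Y)S\in L^1$ because both factors are in $L^2$. I would then condition on $Y$, which is legitimate because $f(Y)$ is $\sigma(Y)$-measurable. If Proposition~\ref{prop:response} is taken as given, this step is already covered, and only the centering of $S_Y$ needs to be spelled out.
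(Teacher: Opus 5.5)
Your proposal is correct and follows essentially the same route as the paper: use $\E_P S_Y=0$ to write $\chi_f=\Cov_P(f(Y),S_Y)$, then $\Var_P(S_Y)=F_Y$ to get $B_f/F_Y=\operatorname{Corr}_P(f(Y),S_Y)^2$, and finally multiply by $F_Y/F_{\mathrm{path}}$ and substitute the pathwise form of $\chi_f$ from Proposition~\ref{prop:transport_response}. You also derive $\E_P S=0$ from the mean-square differentiability assumption ($L^2$ convergence implies $L^1$ convergence of the likelihood-ratio difference quotient), which the paper only asserts as following from normalization.
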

\begin{proof}
    Please see Appendix~\ref{app:response_visibility}.
\end{proof}

Theorem~\ref{thm:propagation_visibility} explains why perturbations with equal path cost can produce different measured responses. The propagated terms show how downstream dynamics, cross-time accumulation, and the selected feature jointly shape the response. Here, \emph{output visibility} measures the fraction of path cost reflected in output-distribution change for small perturbations, and \emph{feature--score alignment} measures the fraction of local output information captured by the chosen feature mean. In particular, a perturbation can change the output distribution while leaving the mean of the chosen feature unchanged to first order in perturbation strength.

\section{Controlled Perturbations Propagation with Equal Path Cost}
\label{sec:generative_response}

In this section, we test the theoretical framework by using controlled perturbations at equal path cost, which provides a common measure of perturbation strength for comparison.

\subsection{Experimental setup}
\label{sec:response_setup}

\begin{figure}[t]
 \centering
 \includegraphics[width=\linewidth]{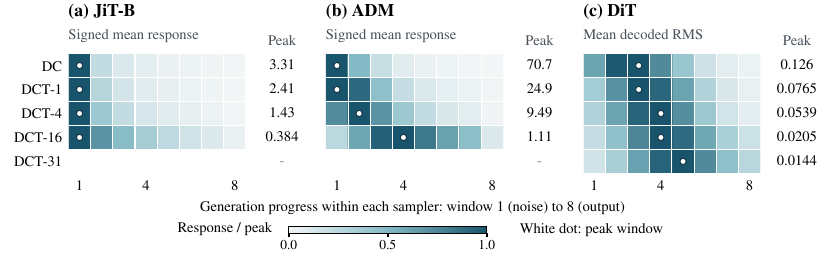}
 \caption{\textbf{Output responses at equal path cost within each model.}
 Rows show perturbation frequencies and columns show the sampling windows in which perturbations are applied, from early to late. Each row is divided by its maximum response. White dots mark the peak windows and numbers on the right give the maxima before normalization.}
 \label{fig:time_frequency}
\end{figure}

We study controlled perturbations in class-conditional ImageNet-256 generation using pixel-space JiT-B/L \citep{li2025jit} and ADM \citep{dhariwal2021diffusion}, and latent-space DiT \citep{peebles2023scalable}, without additional guidance. We group the 128 stochastic sampling steps into eight consecutive windows of 16 steps, numbered 1--8 in generation order from noise toward the output. In each experiment, we perturb the transition means within one window. To vary the spatial direction, we use unit-norm discrete cosine transform (DCT) templates, which represent spatial cosine patterns on the pixel or latent grid: DC is constant, and DCT-$k$ has higher frequency as $k$ increases. We normalize each perturbation to satisfy $\sum_i u_i^\top\Lambda_i^{-1}u_i=1$, giving equal path cost $\mathcal C(\varepsilon)=\varepsilon^2/2$ across windows and directions within each model, conditional on class and template. We use $\varepsilon=0.25$ for JiT/ADM and $0.8$ for DiT. For JiT and ADM, we measure changes in the mean DCT coefficients of generated images. For DiT, we measure the RMS image response after decoding. Measurement details are provided in Appendix~\ref{app:response_protocol}.

\subsection{Spatiotemporal sensitivity to perturbations}
\label{sec:time_frequency}

Figure~\ref{fig:time_frequency} shows how output responses vary with perturbation timing and spatial frequency at equal path cost. In JiT-B, all four frequencies shown produce their largest response in the first window, with responses decreasing toward later windows. In ADM, the peak shifts from window 1 for DC to window 4 for DCT-16; in DiT, it shifts from window 3 for DC to window 5 for DCT-31. Thus, earlier perturbations do not always produce larger output responses: the most sensitive stage depends on both the model and perturbation frequency. These observations are consistent with Section~\ref{sec:response_propagation}, where downstream propagation shapes the output effect of a perturbation beyond its path cost.

We provide two DiT image examples in Appendix~\ref{app:dit_visuals}, illustrating how perturbations with equal path cost produce different output changes when applied at different sampling stages.

\subsection{Local predictions and finite perturbation responses}
\label{sec:local_finite_response}

\begin{figure}[t]
 \centering
 \includegraphics[width=\linewidth]{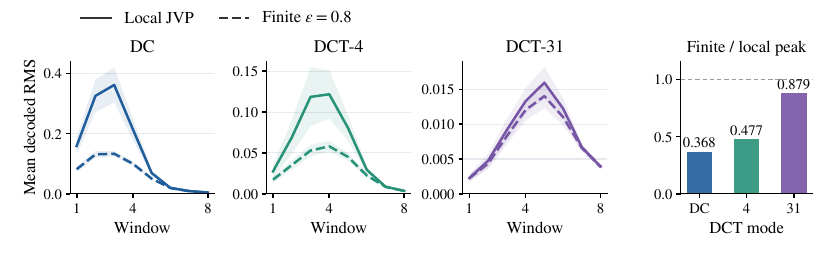}
 \caption{\textbf{Local predictions and finite perturbation responses in DiT.}
 Mean decoded-image RMS of local derivatives and symmetric finite differences ($\varepsilon=0.8$) on the same 32 trajectories, where shaded bands indicate one standard error of the mean across samples. The right panel reports finite/local peak ratios with dashed line denoting equal amplitudes. All runs use 128 stochastic steps.}
 \label{fig:dit_local_finite}
\end{figure}

The responses above are measured at finite perturbation strength, whereas the propagation analysis in Section~\ref{sec:response_propagation} describes the output response to infinitesimal perturbations. We use DiT to test how well this local prediction captures the magnitude and temporal pattern of responses to finite perturbations. On the same 32 sampling trajectories, we compare the mean per-sample RMS of the output derivative $V$, computed through the sampler and decoder using Jacobian--vector products (JVPs), with that of the symmetric finite difference $(Y^{+\varepsilon}-Y^{-\varepsilon})/(2\varepsilon)$ at $\varepsilon=0.8$.

As shown in Figure~\ref{fig:dit_local_finite}, the local and finite responses follow similar trends across sampling windows and peak at windows 3, 4, and 5 for DC, DCT-4, and DCT-31, respectively. However, the finite-response peaks are only 36.8\%, 47.7\%, and 87.9\% of the corresponding local predictions, as shown in the right panel. Thus, in this DiT example, local propagation identifies the most responsive sampling windows, while overestimating the peak response magnitudes at the tested finite amplitude.

\subsection{Feature responses relative to baseline fluctuations}
\label{sec:fluctuation_response}

Having examined propagation through the decoded-image response in DiT, we now return to the feature mean responses in JiT and ADM to compare them on their baseline fluctuation scales. Section~\ref{sec:observable_response} relates a feature's mean response to output information after normalization by its baseline variance. Table~\ref{tab:feature_responses} compares absolute responses, baseline fluctuations, and normalized responses at the first sampling window. In JiT-B/L, DCT-16 has a smaller absolute response than DC, but its baseline fluctuations are sufficiently smaller to give a larger normalized response. In ADM, the absolute response decreases more sharply than the baseline fluctuations, so DCT-16 also has a smaller normalized response. These results show that a smaller mean response can represent a larger change relative to a feature’s baseline fluctuations, consistent with the analysis in Section~\ref{sec:observable_response}. We provide the results for all sampling windows in Appendix~\ref{app:full_response_curves}.

\begin{table}[t]
 \centering\small
 \setlength{\tabcolsep}{3pt}
 \caption{\textbf{Feature responses relative to baseline fluctuations.}
 Perturbations act at window 1 at equal path cost, with $\varepsilon=0.25$ and $N=512$ for JiT-B/L and $256$ for ADM. We report absolute mean response $|\widehat\chi_f|$, baseline standard deviation $s_f$, and normalized response $\widehat B_f=\widehat\chi_f^2/s_f^2$.}
 \label{tab:feature_responses}
 \begin{tabular}{lrrr@{\hspace{10pt}}rrr@{\hspace{10pt}}rrr}
\toprule
 & \multicolumn{3}{c}{JiT-B} & \multicolumn{3}{c}{JiT-L} & \multicolumn{3}{c}{ADM}\\
\cmidrule(lr){2-4}\cmidrule(lr){5-7}\cmidrule(lr){8-10}
Feature & $|\widehat\chi_f|$ & $s_f$ & $\widehat B_f$ & $|\widehat\chi_f|$ & $s_f$ & $\widehat B_f$ & $|\widehat\chi_f|$ & $s_f$ & $\widehat B_f$\\
\midrule
DC & $3.31$ & $127$ & $0.000677$ & $3.36$ & $114$ & $0.000875$ & $70.7$ & $106$ & $0.447$\\
DCT-1 & $2.41$ & $48.5$ & $0.00248$ & $2.46$ & $45.2$ & $0.00296$ & $24.9$ & $42.5$ & $0.343$\\
DCT-4 & $1.43$ & $18.9$ & $0.00577$ & $1.46$ & $21.4$ & $0.00465$ & $6.9$ & $23.5$ & $0.0865$\\
DCT-16 & $0.384$ & $3.88$ & $0.00982$ & $0.366$ & $4.34$ & $0.00712$ & $0.274$ & $4.32$ & $0.00402$\\
\bottomrule
\end{tabular}

\end{table}

\section{Diagnosing Output Errors in Cache-Based Acceleration}
\label{sec:accumulated_defects}

In this section, we investigate whether the perturbation propagation analysis in Section \ref{sec:path_space} can predict output errors caused by practical computational approximations during diffusion sampling. 

We consider cache-based acceleration as a practical example during diffusion sampling, since reusing earlier model predictions reduces the number of model evaluations but introduces approximation errors into subsequent updates \citep{ma2024deepcache,liu2026epacache}. We divide 64 sampling steps into eight windows. Each candidate caches the prediction at the first step of one window $B$ and reuses it for the remaining seven steps, while other steps use fresh predictions. The differences between cached and fresh predictions induce local update perturbations $e_i$ along the reference trajectory, whose detailed derivation is provided in Appendix~\ref{app:cache_protocol}.

Using their first-order propagation, we construct a predictor of the resulting image mean squared error (MSE) and compare its predictions with observed errors:
\begin{equation}
 R_{\mathrm{pred}}=\frac{1}{d}\left\|\sum_{i\in B}T_i e_i\right\|^2,
 \qquad R_{\mathrm{obs}}=\frac{\|Y^{\mathrm{cache}}-Y\|^2}{d},
 \label{eq:cache_risk}
\end{equation}
where $d$ is the number of image entries and $Y^{\mathrm{cache}}$ is the final output when predictions are cached within $B$.

\begin{figure}[t]
 \centering
 \includegraphics[width=\linewidth]{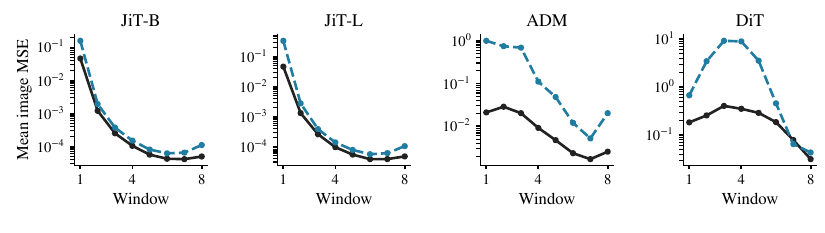}
 \caption{\textbf{Predicted and observed image MSE under caching.} Black solid lines show observed image MSE and blue dashed lines show predictions from the propagation-based predictor, averaged over the same 64 samples per model. Each candidate reuses the prediction cached at the start of one eight-step window, and windows are ordered from noise toward output.}
 \label{fig:cache_window_risks}
\end{figure}

\begin{table}[t]
 \caption{Spearman correlations with observed mean image MSE across eight caching windows, using the runs in Figure~\ref{fig:cache_window_risks}. Bold values mark the highest correlation for each model.}
 \label{tab:cache_ranking}
 \centering
 \setlength{\tabcolsep}{10pt}
 \renewcommand{\arraystretch}{1.08}
 \begin{tabular}{lccc}
\toprule
Model & Our predictor & Local defect & Path cost\\
\midrule
JiT-B & \textbf{0.9524} & 0.2143 & -0.4762\\
JiT-L & \textbf{0.9762} & 0.2381 & -0.4286\\
ADM & \textbf{0.9762} & -0.9048 & -0.9048\\
DiT & \textbf{0.9762} & 0.5000 & -0.3095\\
\bottomrule
\end{tabular}

\end{table}

Figure~\ref{fig:cache_window_risks} compares predicted and observed image MSE. The propagation-based predictor captures the broad variation in error across windows, although predicted and observed magnitudes can differ substantially. For JiT-B/L, the predictions also track the observed error magnitudes more closely, particularly in the middle and later windows.

In Table~\ref{tab:cache_ranking}, our propagation-based predictor achieves Spearman correlations of $0.9524$--$0.9762$ with observed mean image MSE, substantially exceeding the local-defect and path-cost baselines defined in Appendix~\ref{app:cache_predictors} across all four models. These results show that our propagation-based predictor can identify sampling windows in which caching causes larger output errors, which validated the analysis in Section \ref{sec:path_space}.

\section{Conclusion}
\label{sec:discussion}

In this work, we develop a theoretical framework explaining why path cost alone does not determine output effects in diffusion sampling. We separate the changes in output distribution from conditional trajectory changes and characterize the fraction of local output information captured by a selected feature mean. We express output responses as sums of propagated local perturbations, revealing how downstream dynamics, cross-time accumulation, and observation jointly shape measured effects. We use this theory to diagnose temporal and frequency-dependent sensitivity under equal path costs and identify caching windows with larger image deviations in the evaluated models.

\clearpage
\bibliography{paper}
\bibliographystyle{plainnat}
\clearpage
\appendix
\section{Proofs and derivations}
\label{app:path_information}

We follow the order of the theoretical framework in Section~\ref{sec:path_space}. We first derive the exact path cost and its decomposition into output and conditional-trajectory changes. We then prove the propagation identity, establish the local Fisher decomposition, and derive the feature-response bound and the main response identity. Throughout, $P$ and $Q_\varepsilon$ denote the joint distributions of reference and perturbed sampling trajectories, respectively.

\subsection{Derivation of the path cost for local perturbations}
\label{app:discrete_cost}

We derive how local changes in the sampling updates determine the distinguishability of complete trajectories. Starting from the Gaussian sampling rule in Eq.~\eqref{eq:reference_sampler}, we consider perturbations that shift the conditional transition mean while preserving the covariance function and initial distribution. We first express the trajectory likelihood ratio through the sampling transitions, then evaluate each transition's contribution to obtain Eq.~\eqref{eq:path_cost}.

\paragraph{Reference and perturbed transitions.}
We write $H_i=(X_0,\ldots,X_i)$ for the sampling history available before update $i$. Common conditioning variables, when present, are included in $X_0$ and remain fixed during generation. To allow a local perturbation to depend on previous sampling states, we write its mean displacement as $\varepsilon u_i(H_i)$. The reference and perturbed conditional transition distributions are
\begin{equation}
 \begin{aligned}
 K_i^0(\cdot\mid H_i)
 &=\mathcal N\!\left(\mu_i(X_i),\Lambda_i(X_i)\right),\\
 K_i^\varepsilon(\cdot\mid H_i)
 &=\mathcal N\!\left(\mu_i(X_i)+\varepsilon u_i(H_i),\Lambda_i(X_i)\right).
 \end{aligned}
 \label{eq:app_mean_kernels}
\end{equation}
Here $u_i=0$ at unperturbed steps. We assume $\Lambda_i(X_i)$ is positive definite, so both conditional densities are strictly positive. Together with the shared initial distribution, this ensures $Q_\varepsilon\ll P$ \citep{kallenberg2021foundations}. We take the perturbations to have finite expected quadratic cost under $Q_\varepsilon$, so the path KL derived below is finite.

\paragraph{From trajectory likelihoods to transition costs.}
Let $k_i^0$ and $k_i^\varepsilon$ denote the densities of the two kernels in Eq.~\eqref{eq:app_mean_kernels}. Recall that \(\Gamma = (X_0,\ldots,X_M)\) denotes a complete sampling trajectory. We factor the trajectory likelihood ratio into conditional density ratios:
\begin{equation}
 L_\varepsilon(\Gamma):=\frac{\dd Q_\varepsilon}{\dd P}(\Gamma)
 =\prod_{i=0}^{M-1}\ell_i^\varepsilon(X_{i+1},H_i),
 \qquad
 \ell_i^\varepsilon(x,H_i):=
 \frac{k_i^\varepsilon(x\mid H_i)}{k_i^0(x\mid H_i)}.
 \label{eq:app_path_likelihood_product}
\end{equation}
The common initial distribution cancels from this ratio. We take logarithms, average under $Q_\varepsilon$, and condition each term on $H_i$ to obtain
\begin{equation}
 \begin{aligned}
 \mathcal C(\varepsilon)
 &=\E_{Q_\varepsilon}\log L_\varepsilon(\Gamma)\\
 &=\sum_{i=0}^{M-1}\E_{Q_\varepsilon}
   \left[\E_{Q_\varepsilon}
   \left[\log\ell_i^\varepsilon(X_{i+1},H_i)\mid H_i\right]\right]\\
 &=\sum_{i=0}^{M-1}\E_{Q_\varepsilon}
 \KL\!\left(K_i^\varepsilon(\cdot\mid H_i)
             \middle\|K_i^0(\cdot\mid H_i)\right).
 \end{aligned}
 \label{eq:path_chain_budget}
\end{equation}
This follows from the relative-entropy chain rule \citep{cover2006elements}. Each conditional KL compares the two possible updates at the same history; the outer expectation averages over histories reached by the perturbed process.
%The factorization also applies to general absolutely continuous conditional kernels, so it does not require the perturbed state process to be Markov.

\paragraph{Evaluating the Gaussian contribution.}
We now compute each conditional KL explicitly. At a fixed history, we abbreviate $\mu_i=\mu_i(X_i)$, $\Lambda_i=\Lambda_i(X_i)$, and $u_i=u_i(H_i)$, and define the residual $r_i=X_{i+1}-\mu_i$. Since the two Gaussian kernels share their covariance, their density normalization factors cancel. Expanding their quadratic exponents gives
\begin{equation}
 \begin{aligned}
 \log\ell_i^\varepsilon(X_{i+1},H_i)
 &=-\frac12(r_i-\varepsilon u_i)^\top\Lambda_i^{-1}
                  (r_i-\varepsilon u_i)
   +\frac12r_i^\top\Lambda_i^{-1}r_i\\
 &=\varepsilon u_i^\top\Lambda_i^{-1}r_i
   -\frac{\varepsilon^2}{2}u_i^\top\Lambda_i^{-1}u_i.
 \end{aligned}
 \label{eq:app_gaussian_log_ratio}
\end{equation}
Under the perturbed transition, $\E_{Q_\varepsilon}[r_i\mid H_i]=\varepsilon u_i$. Taking the conditional expectation therefore yields
\begin{equation}
 \KL\!\left(K_i^\varepsilon(\cdot\mid H_i)
             \middle\|K_i^0(\cdot\mid H_i)\right)
 =\frac{\varepsilon^2}{2}u_i^\top\Lambda_i^{-1}u_i.
 \label{eq:app_gaussian_transition_cost}
\end{equation}
Substituting into Eq.~\eqref{eq:path_chain_budget}, we obtain the exact expression stated in Eq.~\eqref{eq:path_cost}:
\begin{equation}
 \mathcal C(\varepsilon)=\KL(Q_\varepsilon\|P)
 =\frac{\varepsilon^2}{2}\E_{Q_\varepsilon}
   \sum_{i=0}^{M-1}u_i^\top\Lambda_i^{-1}u_i.
 \label{eq:discrete_cost}
\end{equation}
Thus the trajectory-level change is the expected sum of local mean displacements weighted by inverse transition covariance. For $\Lambda_i=\nu_i I$, each contribution is $\varepsilon^2\|u_i\|^2/(2\nu_i)$, so an equal displacement has greater cost at a lower noise level. This identity holds at finite perturbation strength. The expectation remains under $Q_\varepsilon$ as the perturbation also changes which histories the sampler visits.
%State-dependent covariance is allowed: the required equality is between the reference and perturbed covariance functions evaluated at the same history. History-dependent perturbations, including those arising from stored predictions, are covered by the same derivation.

\subsection{Proof of Proposition~\ref{prop:kl_split}: Decomposition of Path and Output Changes}
\label{app:path_decomposition}

We prove Proposition~\ref{prop:kl_split} by factoring the trajectory likelihood ratio into output and conditional-trajectory likelihood ratios \citep{leonard2014survey}. We assume that the trajectory space is standard Borel and that \(Y\) is a measurable observation. Suppose \(Q_\varepsilon \ll P\) has finite KL, with likelihood \(L_\varepsilon = \mathrm{d}Q_\varepsilon/\mathrm{d}P\). Under these assumptions, regular conditional laws exist.
For every bounded measurable $g$, we have
\begin{equation}
 \E_{q_{\varepsilon,Y}}g(Y)=\E_P[L_\varepsilon g(Y)]
 =\E_P[\E_P(L_\varepsilon\mid Y)g(Y)].
\end{equation}
Identifying the output likelihood ratio, we obtain
\begin{equation}
 \ell_\varepsilon(Y):=\frac{\dd q_{\varepsilon,Y}}{\dd p_Y}(Y)
 =\E_P[L_\varepsilon\mid Y].
 \label{eq:project_likelihood}
\end{equation}
% Thus observing only the output averages the full path likelihood over its compatible histories. By disintegration, $\dd Q_\varepsilon(\cdot\mid Y)/\dd P(\cdot\mid Y)=L_\varepsilon/\ell_\varepsilon(Y)$, $Q_\varepsilon$-almost surely. We split $\log L_\varepsilon$ into $\log\ell_\varepsilon$ and $\log(L_\varepsilon/\ell_\varepsilon)$ and take expectations under $Q_\varepsilon$. 
Thus, when only the output is observed, the path likelihood is averaged over all histories compatible with that output. Disintegrating \(Q_\varepsilon\) and \(P\) with respect to \(Y\) gives
\begin{equation}
\frac{\mathrm{d}Q_\varepsilon(\cdot\mid Y)}{\mathrm{d}P(\cdot\mid Y)}
=
\frac{L_\varepsilon}{\ell_\varepsilon(Y)}
\quad Q_\varepsilon\text{-a.s.}
\end{equation}
This proves Eq.~\eqref{eq:kl_split}: the first expectation is output KL, and the second is conditional-history KL averaged over the perturbed output law. Nonnegativity of both terms gives $0\le\mathcal C_Y\le\mathcal C$.

\subsection{Proof of Proposition~\ref{prop:transport_response}: Perturbation propagation}
\label{app:propagation}

We prove Proposition~\ref{prop:transport_response} by differentiating the discrete sampler on its actual sampling grid. Consider $X_{i+1}^\varepsilon=\mathcal G_i^\varepsilon(X_i^\varepsilon,\xi_i)$ with the initial state and innovations held fixed. Assume the step maps are differentiable in state and perturbation strength along the reference trajectories, and $\phi$ is differentiable at their endpoints, almost surely. For mean response, we additionally impose the integrability and interchange conditions in Appendix~\ref{app:response}. Piecewise differentiable maps are treated at their differentiable points.

\paragraph{Forward transport.}
At fixed reference state and perturbation strength, we define $J_i=D_x\mathcal G_i^0$ and $e_i=\left.\partial_\varepsilon\mathcal G_i^\varepsilon\right|_0$. Define $\delta X_i=\left.\partial_\varepsilon X_i^\varepsilon\right|_0$. Applying the chain rule gives
\begin{equation}
 \delta X_{i+1}=J_i\delta X_i+e_i,\qquad \delta X_0=0.
 \label{eq:tangent_recursion}
\end{equation}
The first term propagates state changes from earlier steps, while the second accounts for the local change in the current update. When $e_i=0$, an earlier state change can still propagate through $J_i$. Iterating gives
\begin{equation}
 \delta X_M=\sum_iJ_{M-1}\cdots J_{i+1}e_i.
\end{equation}
Differentiating $Y^\varepsilon=\phi(X_M^\varepsilon)$ then gives Eq.~\eqref{eq:full_response_transport}. For a scalar output feature, the first-order expansion of its mean is
\begin{equation}
 \E_{Q_\varepsilon}f(Y)=\E_Pf(Y)+\varepsilon\chi_f+o(\varepsilon),
 \qquad \chi_f=\left.\partial_\varepsilon\E_{Q_\varepsilon}f(Y)\right|_0.
 \label{eq:mean_response_definition}
\end{equation}
Under the stated interchange conditions, substituting the expression for $V$ into $\chi_f=\E_P[\nabla f(Y)^\top V]$ yields Eq.~\eqref{eq:tangent_response}. For mean controls, $\mathcal G_i^\varepsilon=\mathcal G_i^0+\varepsilon u_i$ and $e_i=u_i(X_i)$; the control's state derivative is multiplied by $\varepsilon$ and contributes no additional term at zero. We define the local defects induced by output caching in Appendix~\ref{app:cache_protocol} and propagate them using this reference recursion.

\paragraph{Cross-time accumulation and statistical visibility.}
Writing $v_i=T_i e_i$, we expand $\langle\sum_i v_i,\sum_j v_j\rangle$ to obtain Eq.~\eqref{eq:response_accumulation}. The off-diagonal terms retain alignment between transported changes from different steps. Unlike the conditional likelihood scores in Eq.~\eqref{eq:history_fisher_sum} in Appendix~\ref{app:projection} , these output vectors need not be orthogonal in expectation.
% To complete the connection to statistical visibility, we substitute Eq.~\eqref{eq:tangent_response} into Eq.~\eqref{eq:app_statistic_alignment} and multiply by $F_Y/F_{\mathrm{path}}$. This proves Eq.~\eqref{eq:response_visibility_alignment}. When $F_Y=0$, Proposition~\ref{prop:response} instead directly gives $\chi_f=0$; no correlation with a zero-variance score is needed.

\paragraph{Adjoint evaluation.}
For a scalar statistic, we can evaluate the same response by propagating its gradient backward. We define $a_M=D\phi(X_M)^\top\nabla f(Y)$ and $a_i=J_i^\top a_{i+1}$. Combining this recursion with the tangent recursion, we obtain
\begin{equation}
 a_{i+1}^\top\delta X_{i+1}-a_i^\top\delta X_i=a_{i+1}^\top e_i,
 \qquad
 \left.\partial_\varepsilon f(Y^\varepsilon)\right|_0
 =\sum_i a_{i+1}^\top e_i.
\end{equation}
We derive the second identity by telescoping the first. Each term pairs the local injection with the gradient of the final statistic transported back to that step. For $\mathcal G_i=\mu_i+L_i\xi_i$, both tangent and adjoint recursions include the state derivative of $L_i$. With a deterministic time-zero map $h_0$ and decoder $D$ (including latent scaling), we obtain the terminal adjoint $a_M=J_{h_0}^\top J_D^\top\nabla f(Y)$, with each Jacobian evaluated at its corresponding input. These terms ensure that the derivative describes the observed output, including the final transition and decoding.

\subsection{Path Fisher information and output visibility}
\label{app:projection}

We next derive the local version of the exact decomposition. Recall that
$
S=\partial_\epsilon \log\frac{\mathrm{d}Q_\epsilon}{\mathrm{d}P}\Big|_{\epsilon=0}
$
is the path likelihood score. We assume the path likelihood admits the expansion
\begin{equation}
 L_\varepsilon=1+\varepsilon S+r_\varepsilon,
 \qquad \|r_\varepsilon\|_{L^2(P)}=o(|\varepsilon|),
 \qquad S\in L^2(P).
 \label{eq:likelihood_regularity}
\end{equation}
This condition controls the likelihood remainder in mean square and suffices for the response identities below. To obtain second-order KL expansions, we additionally assume that, for some $\delta>0$ and $\varepsilon_0>0$,
\begin{equation}
 \sup_{0<|\varepsilon|<\varepsilon_0}
 \E_P\left|\frac{L_\varepsilon-1}{\varepsilon}\right|^{2+\delta}<\infty.
 \label{eq:likelihood_moments}
\end{equation}
Normalization gives $\E_PS=0$. Conditional expectation contracts the $L^2$ remainder in Eq.~\eqref{eq:likelihood_regularity}, so Eq.~\eqref{eq:project_likelihood} has local score $S_Y=\E_P[S\mid Y]$. Because $S-S_Y$ is orthogonal to every square-integrable function of $Y$, we obtain
\begin{equation}
 \E_PS^2=\E_PS_Y^2+\E_P(S-S_Y)^2
 =F_Y+\E_P\Var_P(S\mid Y).
\end{equation}
This proves Eq.~\eqref{eq:fisher_split}, where $S_Y$ captures the component of the path score resolvable from outputs, while its residual accounts for the unresolved component. For a coarsening $\bar Y=\psi(Y)$, the tower property gives $S_{\bar Y}=\E_P[S_Y\mid\bar Y]$. Using Jensen's inequality then yields $F_{\bar Y}\le F_Y$.

To connect these quantities to KL curvature, we set $x_\varepsilon=L_\varepsilon-1$. The function $k(x)=(1+x)\log(1+x)-x$, extended continuously at $x=-1$, satisfies $k(x)=x^2/2+o(x^2)$ near zero and $k(x)\le Cx^2$ for $x\ge-1$. Equations~\eqref{eq:likelihood_regularity}--\eqref{eq:likelihood_moments} make the normalized squared increments uniformly integrable. Using $\E_Px_\varepsilon=0$, we can therefore take expectations of the expansion to obtain
\begin{equation}
 \KL(Q_\varepsilon\|P)=\tfrac12\varepsilon^2F_{\mathrm{path}}+o(\varepsilon^2),
 \qquad
 \KL(q_{\varepsilon,Y}\|p_Y)=\tfrac12\varepsilon^2F_Y+o(\varepsilon^2),
\end{equation}
which is exactly desired form of Eq.~\eqref{eq:path_fisher}. 
% The observed likelihood increments inherit the moment bound by conditional Jensen. 
Since the output likelihood ratio is the conditional expectation of the path likelihood ratio given the output, conditional Jensen implies that the observed likelihood increments inherit the same moment bound as the path likelihood increments.
Subtracting these expansions in Eq.~\eqref{eq:kl_split}, we also obtain $\mathcal C_{\mathrm{bridge}\mid Y}(\varepsilon)=\frac12\varepsilon^2F_{\mathrm{bridge}\mid Y}+o(\varepsilon^2)$. Thus $F_Y/F_{\mathrm{path}}$ measures the locally visible fraction whenever $F_{\mathrm{path}}>0$.

\paragraph{Additivity and Gaussian mean perturbations.}
For differentiable conditional kernels with the same initial law, we define the conditional scores $s_i=\left.\partial_\varepsilon\log\ell_i^\varepsilon(X_{i+1},H_i)\right|_0$ and $F_i(H_i)=\E_P[s_i^2\mid H_i]$. Assume that the scores are square integrable and differentiation can be interchanged with conditional normalization. Equation~\eqref{eq:app_path_likelihood_product} then gives $S=\sum_i s_i$ and $\E_P[s_i\mid H_i]=0$. For $i<j$, $s_i$ is measurable with respect to $H_j$, so $\E_P[s_i s_j]=\E_P[s_i\E_P(s_j\mid H_j)]=0$. We therefore obtain
\begin{equation}
 F_{\mathrm{path}}=\E_PS^2
 =\sum_{i=0}^{M-1}\E_PF_i(H_i).
 \label{eq:history_fisher_sum}
\end{equation}
For Gaussian mean perturbations, differentiating Eq.~\eqref{eq:app_gaussian_log_ratio} gives $s_i=u_i^\top\Lambda_i^{-1}(X_{i+1}-\mu_i)$. Using the reference residual's conditional covariance $\Lambda_i$, we obtain $F_i(H_i)=u_i^\top\Lambda_i^{-1}u_i$ and hence
\begin{equation}
 F_{\mathrm{path}}=\E_P\sum_{i=0}^{M-1}u_i^\top\Lambda_i^{-1}u_i.
\end{equation}

\subsection{Proof of Proposition~\ref{prop:response}: Feature response and output information}
\label{app:response}

We prove Proposition~\ref{prop:response} by differentiating the expectation of a statistic. For $f(Y)\in L^2(P)$, Eq.~\eqref{eq:likelihood_regularity} and Cauchy--Schwarz justify differentiating $\E_P[f(Y)L_\varepsilon]$ at zero. Conditioning on $Y$ and using $\E_PS_Y=0$, we derive
\begin{equation}
 \chi_f=\E_P[f(Y)S]=\E_P[f(Y)S_Y]
 =\E_P[(f(Y)-\E_Pf(Y))S_Y].
\end{equation}
Applying Cauchy--Schwarz once more gives $\chi_f^2\le\Var_P(f(Y))F_Y$. Combining this with Eq.~\eqref{eq:fisher_split} proves the bound in Eq.~\eqref{eq:response_bound}.

\subsection{Proof of Theorem~\ref{thm:propagation_visibility}: Perturbation propagation and measured output response}
\label{app:response_visibility}
We use $\E_PS_Y=0$ and Eq.~\eqref{eq:response_bound} to obtain $\chi_f=\Cov_P(f(Y),S_Y)$. For $F_Y>0$, we have $\Var_P(S_Y)=F_Y$ and derive
\begin{equation}
 \frac{B_f}{F_Y}
 =\frac{\chi_f^2}{\Var_P(f(Y))F_Y}
 =\operatorname{Corr}_P(f(Y),S_Y)^2.
 \label{eq:app_statistic_alignment}
\end{equation}
Multiplying by $F_Y/F_{\mathrm{path}}$ and substituting the propagated response from Eq.~\eqref{eq:tangent_response} proves Eq.~\eqref{eq:response_visibility_alignment}. If $F_Y=0$, Proposition~\ref{prop:response} gives $\chi_f=B_f=0$, completing the proof.

Thus the response depends on the statistic's alignment with the output score. The centered unit-variance statistic $f(Y)=S_Y/\sqrt{F_Y}$ attains equality in the response bound. If $F_Y=0$, every square-integrable mean response vanishes. We consequently obtain
\begin{equation}
 F_Y=\sup_{\E_Pf(Y)=0,\;\E_Pf(Y)^2=1}\chi_f^2,
\end{equation}
whenever the class contains a unit-variance observable. A chosen statistic measures one projection of this response geometry; optimizing over all such statistics recovers the full local output information.

\paragraph{Fluctuation scale and the limits of a selected mean.}
Writing $\Delta_f(\varepsilon)=\E_{Q_\varepsilon}f(Y)-\E_Pf(Y)$, we have $\Delta_f(\varepsilon)=\varepsilon\chi_f+o(\varepsilon)$. Thus $\varepsilon^2B_f$ is the leading squared mean shift in units of the reference feature variance. For $F_{\mathrm{path}}>0$, we obtain
\[
 \lim_{\varepsilon\to0}
 \frac{\Delta_f(\varepsilon)^2}
 {2\Var_P(f(Y))\,\mathcal C(\varepsilon)}
 =\frac{B_f}{F_{\mathrm{path}}}.
\]
% Replacing $f$ by $af+b$ with $a\ne0$ multiplies both its squared response and variance by $a^2$, leaving $B_f$ unchanged. A smaller absolute response can therefore correspond to a larger $B_f$ when the reference variance is sufficiently smaller. The covariance identity also shows that output changes orthogonal to the centered feature have zero first-order mean response, even when $F_Y>0$. This concerns the feature's mean, rather than all changes in its distribution. 
Replacing $f$ by $af+b$ with $a\ne0$ multiplies both its squared response and variance by $a^2$, leaving $B_f$ unchanged. A smaller absolute response can therefore correspond to a larger $B_f$ when the reference variance is sufficiently smaller. The covariance identity also shows that output changes orthogonal to the centered feature have zero first-order mean response, even when $F_Y>0$. This concerns the feature's mean, rather than all changes in its distribution. When $F_Y=0$, Eq.~\eqref{eq:response_bound} directly gives $\chi_f=0$; Eq.~\eqref{eq:response_visibility_alignment} is not used because correlation with a zero-variance score is undefined. Predicting finite mean shifts by $\varepsilon\chi_f$ additionally requires the higher-order remainder to be negligible.

To connect this statistical derivative with a dynamical derivative, we realize $Y^\varepsilon$ using common initial randomness and innovations, with marginal law $q_{\varepsilon,Y}$. Assume $f$ is continuously differentiable, $\partial_\varepsilon Y^\varepsilon|_0=V$ exists almost surely, and $[f(Y^\varepsilon)-f(Y)]/\varepsilon$ is dominated by an integrable random variable near zero. We exchange differentiation and expectation and apply the chain rule to obtain
\begin{equation}
 \chi_f=\left.\partial_\varepsilon\E f(Y^\varepsilon)\right|_0
 =\E[\nabla f(Y)^\top V].
\end{equation}
This is the same distributional mean derivative as the likelihood calculation. Substituting the expansion of $V$ from Eq.~\eqref{eq:full_response_transport} yields Eq.~\eqref{eq:response_visibility_alignment}.

\section{Gaussian construction for equal-cost perturbations}
\label{app:toy_extensions}

We give the Gaussian reverse-diffusion construction used in Figure~\ref{fig:toy_visibility}, in which perturbations with equal path cost reinforce or cancel at the output. Our goal is to hold the noise-normalized intervention strength fixed while changing how its contributions align after propagation. We take $C_0=\operatorname{diag}(1/4,1)$ and the forward chain $Z_{s+1}=\sqrt\alpha Z_s+\sqrt{1-\alpha}\xi_s$, with $\alpha=0.9$, $M=24$, and independent standard Gaussian innovations. We obtain the forward covariances $C_s=\alpha^sC_0+(1-\alpha^s)I$. Conditioning the joint Gaussian law, we derive the exact reverse kernels in generation order, $X_i=Z_{M-i}$:
\begin{equation}
 X_{i+1}=A_iX_i+\Lambda_i^{1/2}\xi_i,\quad
 A_i=\sqrt\alpha C_{s-1}C_s^{-1},\quad
 \Lambda_i=C_{s-1}-A_iC_sA_i^\top,\quad s=M-i.
 \label{eq:gaussian_reverse_kernels}
\end{equation}
We start from $X_0\sim\mathcal N(0,C_M)$ and observe $Y=X_M$, whose reference covariance is $C_0$. We apply deterministic mean controls:
\begin{equation}
 X_{i+1}=A_iX_i+\Lambda_i^{1/2}\xi_i+\varepsilon u_i,
 \qquad u_i=\sqrt{(\Lambda_i)_{11}}w_i e_1,
 \qquad Y=X_M.
 \label{eq:toy_controlled_chain}
\end{equation}

To make the propagation explicit, we define
\begin{equation}
 H_i=A_{M-1}\cdots A_{i+1},\qquad
 h_i=(H_i)_{11}\sqrt{(\Lambda_i)_{11}},
\end{equation}
with $H_{M-1}=I$. Here $H_i$ transports the injected offset to the endpoint, and $h_i$ includes its transition-noise scale. Let $w = (w_0,\ldots,w_{M-1})^{\top} $ be the temporal control vector, and $h = (h_0,\ldots,h_{M-1})^{\top} \in \mathbb{R}^M$ collect the downstream gains \(h_i\). We obtain the controlled endpoint mean $m_Y=\varepsilon(h^\top w)e_1$, and its covariance remains $C_0$. Using Eq.~\eqref{eq:discrete_cost} and the Gaussian endpoint KL, we derive
\begin{equation}
 F_{\mathrm{path}}=\|w\|^2,\qquad
 F_Y=\frac{(h^\top w)^2}{(C_0)_{11}},\qquad
 \mathcal C=\tfrac12\varepsilon^2\|w\|^2,\qquad
 \mathcal C_Y=\frac{\varepsilon^2(h^\top w)^2}{2(C_0)_{11}}.
 \label{eq:toy_cost_response}
\end{equation}
We choose the reinforcing direction $w_+=h/\|h\|$. For the cancelling direction, we define $r_i=(w_+)_i$ for $i<12$ and $r_i=-(w_+)_i$ otherwise, and set
\begin{equation}
 w_-=\frac{r-w_+(w_+^\top r)}{\|r-w_+(w_+^\top r)\|}.
\end{equation}
Both directions have unit norm, and $h^\top w_-=0$. Thus the budget depends on the norm of the temporal control, while output change depends on its projection onto the downstream gains. The cancelling control consequently preserves the full endpoint Gaussian law at every amplitude while changing the intermediate path law. At $\varepsilon=2$, the endpoint/bridge KL values are $(1.957579,0.042421)$ for the reinforcing control and $(0,2)$ for the cancelling control, as shown in Figure~\ref{fig:toy_visibility}. The intermediate mean obeys $m_{i+1}=A_im_i+\varepsilon u_i$, with $m_0=0$.

% For the along-trajectory views in Figure~\ref{fig:toy_visibility_full}, let $G_r=A_{M-1}\cdots A_r$, with $G_M=I$. We propagate the mean recurrence to the endpoint to obtain
% \[
%  G_rm_r=\varepsilon\sum_{i<r}H_iu_i
%        =\varepsilon\sum_{i<r}h_iw_i e_1.
% \]
% This is the terminal mean shift when interventions are applied only before node $r$. Panel~a plots its first coordinate and the accumulated magnitudes $\varepsilon\sum_{i<r}|h_iw_i|$. Under shared initial randomness and innovations, the paired displacement obeys the same deterministic mean recurrence, so the signed curve also gives the exact paired terminal displacement for that truncated intervention.

% At node $r$, the reference and controlled state covariances both equal $C_{M-r}$. The path prefix and its final state consequently have KLs
% \[
%  \mathcal C_{0:r}=\frac{\varepsilon^2}{2}\sum_{i<r}w_i^2,
%  \qquad
%  \mathcal C_{X_r}=\frac12m_r^\top C_{M-r}^{-1}m_r,
%  \qquad
%  0\le\mathcal C_{X_r}\le\mathcal C_{0:r}.
% \]
% These are the dashed and solid curves in panel~b. Their difference is the conditional-history KL given $X_r$, by the same chain rule as Eq.~\eqref{eq:kl_split}. The prefix cost is nondecreasing because each increment is nonnegative; current-state KL depends on the accumulated displacement and may decrease. At $r=M$, these curves recover the terminal costs given above.

The observable $f(Y)=Y_1$ has $\chi_f=h^\top w$ and $\Var_P(f(Y))=1/4$, hence $B_f=F_Y$ in this Gaussian mean-shift family.

\section{Experimental protocols and measurements}
\label{app:response_protocol}

\subsection{Measuring output responses}
\label{app:measurement_map}

For each sample, we generate two outputs, $Y_n^+$ and $Y_n^-$, by applying perturbations of strengths $+\varepsilon$ and $-\varepsilon$. We keep the class, initial noise, noise drawn at every sampling step, and spatial template identical in both runs. We then divide the output difference by $2\varepsilon$ to estimate the response to the perturbation.

For JiT and ADM, we project each generated image onto the DCT template used for the perturbation and average the resulting response over $N$ samples:
\begin{equation}
 \widehat\chi_f=\frac1N\sum_{n=1}^N\frac{f(Y_n^+)-f(Y_n^-)}{2\varepsilon},
 \qquad f(Y)=\langle q_k,Y\rangle,
 \label{eq:measured_scalar_response}
\end{equation}
where $q_k$ is the unit-norm image template defined below. For DiT, we compute the root-mean-square response across the decoded image entries for each sample, then average over samples:
\begin{equation}
 \widehat R=\frac1N\sum_{n=1}^N
 \frac{\|Y_n^+-Y_n^-\|}{2\varepsilon\sqrt d},
 \qquad d=3\cdot256^2.
 \label{eq:measured_vector_response}
\end{equation}

For the frequency comparisons in Section~\ref{sec:fluctuation_response}, we estimate the variance $s_f^2$ of each DCT projection from unperturbed images, using the sample variance with denominator $N-1$ across classes and noise realizations. We compute $\widehat B_f=\widehat\chi_f^2/s_f^2$, using the same baseline variance for all injection windows of a given feature. We report standard errors of the response means by dividing the sample standard deviation of the per-sample responses by $\sqrt N$.

\subsection{Reference samplers and perturbation design}
\label{app:sampler_protocols}

\paragraph{JiT.}
The conditional clean predictor $D_\theta(z,t,c)$ defines drift
\begin{equation}
 b_t(z,c)=\frac{D_\theta(z,t,c)-z}{1-t}
            +\lambda_0\bigl(tD_\theta(z,t,c)-z\bigr),
 \qquad \sigma_t=\sqrt{2\lambda_0}(1-t).
\end{equation}
The parameter $\lambda_0$ scales the added diffusion and its matched score drift. We use $\lambda_0=0.1$, standard normal pixel initialization, and Euler transitions on $t_i=i\Delta t$, $\Delta t=t_*/128$, terminating at $t_*=0.95$. Their covariance is $\Lambda_i=\nu_iI$ with $\nu_i=2\lambda_0(1-t_i)^2\Delta t$. We obtain the generated output as $Y=D_\theta(Z_{128},t_*,c)$. This stochastic representation is induced by the Gaussian interpolation $Z_t=tX+(1-t)E$: with an exact clean predictor, adding score drift $\lambda_0(tD_\theta-z)$ and its matched diffusion preserves the interpolation marginals. With a learned predictor, the displayed kernel specifies the reference law used for the comparisons.

\paragraph{ADM.}
We use the class-conditional ImageNet-256 ADM U-Net. The noise-prediction channels give the clean prediction, clipped to $[-1,1]$. The original 1000-step linear schedule of the denoising diffusion probabilistic model (DDPM) is respaced to 129 retained times: 128 positive-variance transitions with fixed-small posterior covariance, followed by a deterministic time-zero transition. The resulting pixel image is $Y$. The variance-output channels are unused in this fixed-covariance reference.

\paragraph{DiT.}
DiT-XL/2 evolves a $4\times32\times32$ latent state. The 1000-step linear DDPM schedule is similarly respaced to 129 retained times. The network's noise and variance outputs specify the transition mean and diagonal learned-range covariance $L_iL_i^\top$, with clean-latent clipping disabled. After 128 stochastic transitions, the observed output is
\begin{equation}
 Y=D_{\mathrm{VAE}}\bigl(h_0(Z_{128})/0.18215\bigr),
\end{equation}
Here $h_0$ is the deterministic time-zero transition and $D_{\mathrm{VAE}}$ is the variational autoencoder (VAE) decoder. We use the unclamped decoded pixels for response measurements . We compute the output response through the sampler and decoder, including the variance head, $h_0$, and latent scaling. All three model families use conditional predictions without classifier or classifier-free guidance amplification.

\paragraph{Spatial templates and temporal normalization.}
On an $H\times H$ grid, let $\psi_k(r,s)$ be proportional to $\cos(\pi k(s+1/2)/H)$ and normalized to unit Frobenius norm. These are horizontal modes of the type-II discrete cosine transform: $k=0$ is the constant DC mode, and $k>0$ is labeled DCT-$k$. We use $k\in\{0,1,4,16\}$ for all models and additionally $k=31$ for DiT. Pixel controls use $q_k=(1,1,1)^\top\otimes\psi_k/\sqrt3$, with $H=256$. The same template defines $f_k(Y)=\langle q_k,Y\rangle$. For a window $B$ and isotropic transition covariance $\nu_iI$, both pixel implementations produce the constant mean offset
\begin{equation}
 u_i=c_Bq_k\,\mathbf1\{i\in B\},
 \qquad c_B=\left(\sum_{i\in B}\nu_i^{-1}\right)^{-1/2}.
 \label{eq:pixel_control_normalization}
\end{equation}
For JiT this is equivalently a constant drift perturbation $c_Bq_k/\Delta t$ within the window. DiT uses $H=32$, $q_k=r\otimes\psi_k$ with per-sample channel signs $r_c\in\{-1/2,1/2\}$, and
\begin{equation}
 u_i(Z_i)=\frac{L_i(Z_i)q_k}{\sqrt{|B|}}\,\mathbf1\{i\in B\}.
 \label{eq:latent_control_normalization}
\end{equation}
In both cases the sum of conditional mean-shift Fisher costs is exactly one, giving path KL $\varepsilon^2/2$ by Eq.~\eqref{eq:discrete_cost}. Windows index progress through each model's sampling schedule.

\subsection{Models and sampling configurations}
\label{app:run_details}

\begin{table}[htbp]
 \caption{Sample counts $N$ and perturbation amplitudes $\varepsilon$ for the controlled perturbation experiments in Section~\ref{sec:generative_response}. All use 128 stochastic steps, eight injection windows, and path Fisher $F_{\mathrm{path}}=1$.}
 \label{tab:response_protocol}
 \centering
 \begin{tabular}{lrr}
 \toprule
 Model & $N$ & $\varepsilon$\\
 \midrule
 JiT-B/16 & 512 & 0.25\\
 JiT-L/16 & 512 & 0.25\\
 ADM & 256 & 0.25\\
 DiT-XL/2 & 256 & 0.80\\
 \bottomrule
 \end{tabular}
\end{table}

We use publicly released class-conditional ImageNet-256 checkpoints for JiT-B/16 and JiT-L/16 \citep{li2025jit}, ADM \citep{dhariwal2021diffusion}, and DiT-XL/2 \citep{peebles2023scalable}. For JiT, we use exponential moving average (EMA) weights. Table~\ref{tab:response_protocol} lists the sample counts and perturbation amplitudes.

We use fixed class subsets, with one noise realization per sample; the reported variability therefore includes both class and noise variation. For comparisons across sampling windows, we keep the classes, initial states, and noise at every step fixed. The DiT local/finite comparison uses 32 paired samples.

\subsection{Caching protocol and local defects}
\label{app:cache_protocol}

We use the reference kernels and observations in Appendix~\ref{app:sampler_protocols}, with $M=64$ stochastic steps. JiT uses $\Delta t=0.95/64$; ADM and DiT use 65 retained diffusion times, comprising 64 stochastic transitions and the final deterministic transition. Each model has 64 paired samples and eight candidate windows $B=\{a,\ldots,a+7\}$. For each candidate, we store the network prediction at step $a$ and reuse it directly for the next seven steps, omitting seven network evaluations. Outside $B$, we evaluate fresh predictions. The final deterministic prediction and decoder are also evaluated afresh. We pair the cached and reference runs using the same class, initial state, and noise at every step.

To construct the propagation-based predictor, we evaluate the prediction differences along the reference trajectory. Let $o_i(X_i)$ denote the fresh prediction at step $i$. For $i\in B$, the prediction defect is $\Delta o_i=o_a(X_a)-o_i(X_i)$; outside $B$, it is zero. Writing $\mathcal H_i(x,o,\xi)$ for the sampling update with a supplied prediction $o$, we define its local response to this defect by
\begin{equation}
 e_i=D_o\mathcal H_i(X_i,o_i(X_i),\xi_i)[\Delta o_i].
 \label{eq:cache_local_defect}
\end{equation}
We evaluate this directional derivative at the reference state and prediction, holding the state and noise fixed. The reference Jacobian $J_i$ includes the state dependence of the fresh network prediction. We then propagate the local defects using
\begin{equation}
 \delta Z_{i+1}=J_i\delta Z_i+e_i,
 \qquad \delta Z_0=0,
 \qquad V=\delta Y=\sum_iT_i e_i.
\end{equation}
At the first step of the window, $e_a=0$. We compare $\|V\|^2/d$ with the observed image MSE from direct reuse, as in Eq.~\eqref{eq:cache_risk}. This predictor is a first-order approximation along the reference trajectory; its accuracy for the finite caching error is assessed empirically.

For JiT, $o_i$ is the clean prediction and the local update defect is $e_i=\Delta t\,[b_i(Z_i;o_a)-b_i(Z_i;o_i)]$. For ADM, $o_i$ is the clipped clean prediction, the posterior covariance is fixed, and $e_i=\mu_i(Z_i;o_a)-\mu_i(Z_i;o_i)$. Both means are affine in their supplied clean predictions, so these directional derivatives equal the update differences at the fixed reference state.

For DiT, we reuse the complete noise-and-variance prediction. Write $\Sigma_i=L_iL_i^\top$ for its diagonal covariance and $\ell_i=\log\operatorname{diag}\Sigma_i$. At the reference state and prediction, define $\dot\mu_i=D_o\mu_i[\Delta o_i]$ and $\dot\ell_i=D_o\ell_i[\Delta o_i]$. The reparameterized step has local response
\begin{equation}
 e_i=\dot\mu_i+\tfrac12L_i\operatorname{diag}(\dot\ell_i)\xi_i.
 \label{eq:cache_variance_defect}
\end{equation}
All state Jacobians include the learned covariance derivative. We propagate the response through the deterministic time-zero map, latent scaling, and VAE decoder. Thus the predicted and observed errors are both measured in decoded pixels.

\subsection{Caching error prediction and evaluation}
\label{app:cache_predictors}

For each sample and candidate window, we compute $R_{\mathrm{pred}}$ using the tangent recursion. The local-defect baseline is $L_{\mathrm{state}}=\sum_i\|e_i\|^2/d_z$, where $d_z$ is the pixel-state dimension for JiT/ADM and the latent-state dimension for DiT.

The path-cost baseline uses the local quadratic approximation $L_{\mathrm{path}}=\sum_i F_i/2$. At fixed covariance, $F_i=e_i^\top\Sigma_i^{-1}e_i$. For DiT's diagonal Gaussian transition,
\begin{equation}
 F_i=\sum_j\left[\frac{\dot\mu_{i,j}^{\,2}}{\Sigma_{i,jj}}+\frac12\dot\ell_{i,j}^{\,2}\right].
 \label{eq:cache_local_fisher}
\end{equation}
Here $F_i$ measures conditional Fisher information in the direction of the prediction defect, with the trajectory history held fixed. Its reference expectation summed over steps gives the corresponding path Fisher information. The resulting baseline is a local approximation to path cost, rather than an evaluation of the finite path KL under caching. We average each baseline over samples and compare its ranking across windows with that of observed mean image MSE.

\subsection{Image visualization protocol}
\label{app:dit_display}

The DiT examples use DCT-4 perturbations in windows 4 and 8 with $\varepsilon=0.8$, giving path KL $0.32$ in each window. Images are clipped to $[-1,1]$ for display; differences use unclipped decoded pixels. Difference maps show per-pixel RGB RMS relative to the reference, with a common color scale across samples and windows. The scale saturates at the pooled 99.5th percentile. Central-response annotations report the full-image RMS of $(Y^+-Y^-)/(2\varepsilon)$.

\section{Supplementary experimental results}
\label{app:results}

\subsection{DiT image examples at equal path cost}
\label{app:dit_visuals}

Figures~\ref{fig:dit_observations} and~\ref{fig:dit_second} show two fixed DiT samples under DCT-4 perturbations in windows 4 and 8. Each perturbation has path cost $\mathcal C=0.32$ at $\varepsilon=0.8$. The central-response RMS is $19.88\times$ and $11.23\times$ larger in window 4 than in window 8 for the two samples, respectively, illustrating the dependence of output response on perturbation timing at equal path cost. Both examples use 128 stochastic steps and the display protocol in Appendix~\ref{app:dit_display}.

\begin{figure}[!htbp]
 \centering
 \includegraphics[width=0.9\linewidth]{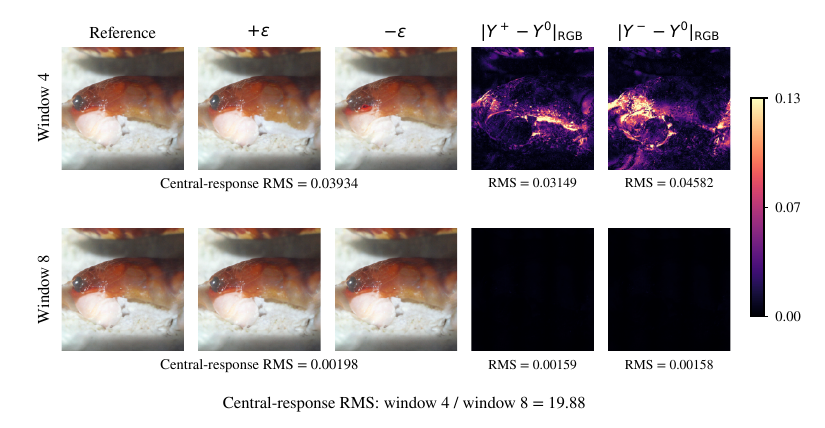}
 \caption{\textbf{DiT image responses to equal-cost perturbations.} A fixed sample under positive and negative DCT-4 perturbations in windows 4 and 8. Difference maps show per-pixel RGB RMS relative to the reference $Y^0$, with full-image RMS below each map. Central-response annotations measure the RMS of $(Y^+-Y^-)/(2\varepsilon)$; the window-4/window-8 ratio is $19.88$. Differences use unclipped pixels and a shared color scale across both examples.}
 \label{fig:dit_observations}
\end{figure}

\begin{figure}[!htbp]
 \centering
 \includegraphics[width=0.9\linewidth]{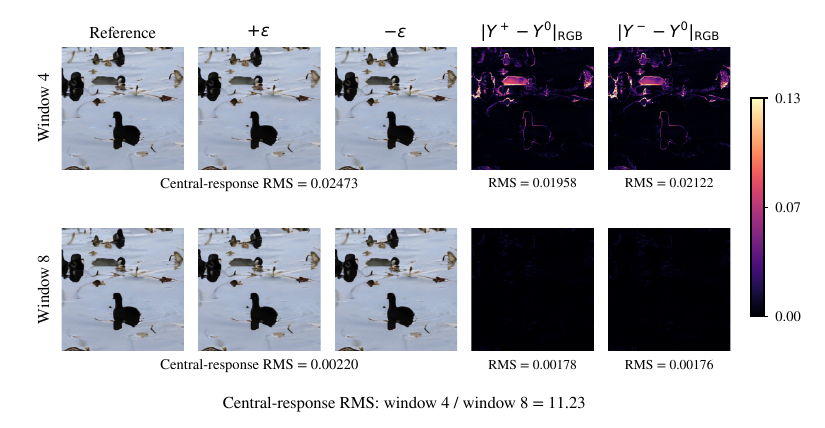}
 \caption{Second DiT example, with the endpoints, branch-difference maps, and shared color scale defined in Appendix~\ref{app:dit_display}. Reference-relative displacement RMS is annotated separately for each branch; the window-4/window-8 central-response RMS ratio is $11.23$.}
 \label{fig:dit_second}
\end{figure}

\FloatBarrier
\subsection{Feature responses across sampling windows}
\label{app:full_response_curves}

Tables~\ref{tab:feature_responses_full} and~\ref{tab:normalized_responses_full} extend Table~\ref{tab:feature_responses} to all eight sampling windows. For each feature, baseline variance is fixed across windows, so the temporal variation in $\widehat B_f$ comes from its mean response. Across features, the different baseline fluctuations determine how absolute responses translate into normalized responses.

\begin{table}[htbp]
 \centering\small
 \setlength{\tabcolsep}{3pt}
 \caption{\textbf{Feature mean responses across sampling windows.} Signed mean responses $\widehat\chi_f$ for windows W1--W8, ordered from early to late sampling, with baseline standard deviations $s_f$. All use $\varepsilon=0.25$, with $N=512$ for JiT-B/L and $256$ for ADM.}
 \label{tab:feature_responses_full}
 \begin{tabular}{llrrrrrrrrr}
\toprule
Model & Feature & $s_f$ & W1 & W2 & W3 & W4 & W5 & W6 & W7 & W8\\
\midrule
JiT-B & DC & $127$ & $3.31$ & $0.704$ & $0.361$ & $0.212$ & $0.128$ & $0.0786$ & $0.0462$ & $0.0187$\\
 & DCT-1 & $48.5$ & $2.41$ & $0.661$ & $0.345$ & $0.206$ & $0.132$ & $0.0785$ & $0.0442$ & $0.0131$\\
 & DCT-4 & $18.9$ & $1.43$ & $0.538$ & $0.303$ & $0.189$ & $0.12$ & $0.0768$ & $0.0407$ & $0.0164$\\
 & DCT-16 & $3.88$ & $0.384$ & $0.264$ & $0.183$ & $0.131$ & $0.0908$ & $0.061$ & $0.0371$ & $0.0149$\\
\midrule
JiT-L & DC & $114$ & $3.36$ & $0.709$ & $0.358$ & $0.215$ & $0.132$ & $0.0783$ & $0.0424$ & $0.0165$\\
 & DCT-1 & $45.2$ & $2.46$ & $0.67$ & $0.351$ & $0.21$ & $0.129$ & $0.0821$ & $0.0432$ & $0.0168$\\
 & DCT-4 & $21.4$ & $1.46$ & $0.548$ & $0.306$ & $0.191$ & $0.123$ & $0.0773$ & $0.0443$ & $0.0157$\\
 & DCT-16 & $4.34$ & $0.366$ & $0.259$ & $0.183$ & $0.131$ & $0.0925$ & $0.062$ & $0.0377$ & $0.0151$\\
\midrule
ADM & DC & $106$ & $70.7$ & $35.1$ & $13.8$ & $5.74$ & $2.71$ & $1.42$ & $0.782$ & $0.14$\\
 & DCT-1 & $42.5$ & $24.9$ & $22.4$ & $11$ & $5.07$ & $2.53$ & $1.36$ & $0.764$ & $0.138$\\
 & DCT-4 & $23.5$ & $6.9$ & $9.49$ & $6.53$ & $3.62$ & $2.03$ & $1.19$ & $0.706$ & $0.135$\\
 & DCT-16 & $4.32$ & $0.274$ & $0.662$ & $1.04$ & $1.11$ & $0.912$ & $0.686$ & $0.501$ & $0.118$\\
\bottomrule
\end{tabular}

\end{table}

\begin{table}[htbp]
 \centering\small
 \setlength{\tabcolsep}{3pt}
 \caption{\textbf{Feature responses on the baseline fluctuation scale.} Normalized responses $\widehat B_f=\widehat\chi_f^2/s_f^2$ for the measurements in Table~\ref{tab:feature_responses_full}.}
 \label{tab:normalized_responses_full}
 \begin{tabular}{llrrrrrrrr}
\toprule
Model & Feature & W1 & W2 & W3 & W4 & W5 & W6 & W7 & W8\\
\midrule
JiT-B & DC & $0.000677$ & $3.06\mathrm{e}{-5}$ & $8.05\mathrm{e}{-6}$ & $2.77\mathrm{e}{-6}$ & $1.01\mathrm{e}{-6}$ & $3.82\mathrm{e}{-7}$ & $1.32\mathrm{e}{-7}$ & $2.17\mathrm{e}{-8}$\\
 & DCT-1 & $0.00248$ & $0.000186$ & $5.07\mathrm{e}{-5}$ & $1.81\mathrm{e}{-5}$ & $7.38\mathrm{e}{-6}$ & $2.62\mathrm{e}{-6}$ & $8.32\mathrm{e}{-7}$ & $7.26\mathrm{e}{-8}$\\
 & DCT-4 & $0.00577$ & $0.000813$ & $0.000258$ & $0.000101$ & $4.08\mathrm{e}{-5}$ & $1.66\mathrm{e}{-5}$ & $4.66\mathrm{e}{-6}$ & $7.52\mathrm{e}{-7}$\\
 & DCT-16 & $0.00982$ & $0.00465$ & $0.00222$ & $0.00114$ & $0.000548$ & $0.000247$ & $9.13\mathrm{e}{-5}$ & $1.48\mathrm{e}{-5}$\\
\midrule
JiT-L & DC & $0.000875$ & $3.89\mathrm{e}{-5}$ & $9.92\mathrm{e}{-6}$ & $3.56\mathrm{e}{-6}$ & $1.35\mathrm{e}{-6}$ & $4.73\mathrm{e}{-7}$ & $1.39\mathrm{e}{-7}$ & $2.1\mathrm{e}{-8}$\\
 & DCT-1 & $0.00296$ & $0.00022$ & $6.03\mathrm{e}{-5}$ & $2.16\mathrm{e}{-5}$ & $8.2\mathrm{e}{-6}$ & $3.3\mathrm{e}{-6}$ & $9.14\mathrm{e}{-7}$ & $1.39\mathrm{e}{-7}$\\
 & DCT-4 & $0.00465$ & $0.000653$ & $0.000204$ & $7.98\mathrm{e}{-5}$ & $3.3\mathrm{e}{-5}$ & $1.3\mathrm{e}{-5}$ & $4.28\mathrm{e}{-6}$ & $5.34\mathrm{e}{-7}$\\
 & DCT-16 & $0.00712$ & $0.00356$ & $0.00177$ & $0.000912$ & $0.000454$ & $0.000204$ & $7.57\mathrm{e}{-5}$ & $1.21\mathrm{e}{-5}$\\
\midrule
ADM & DC & $0.447$ & $0.11$ & $0.017$ & $0.00295$ & $0.000655$ & $0.00018$ & $5.47\mathrm{e}{-5}$ & $1.75\mathrm{e}{-6}$\\
 & DCT-1 & $0.343$ & $0.277$ & $0.0668$ & $0.0143$ & $0.00354$ & $0.00103$ & $0.000324$ & $1.06\mathrm{e}{-5}$\\
 & DCT-4 & $0.0865$ & $0.164$ & $0.0774$ & $0.0239$ & $0.00749$ & $0.00255$ & $0.000906$ & $3.3\mathrm{e}{-5}$\\
 & DCT-16 & $0.00402$ & $0.0235$ & $0.0585$ & $0.0661$ & $0.0446$ & $0.0252$ & $0.0134$ & $0.00074$\\
\bottomrule
\end{tabular}

\end{table}

\end{document}